\title{On the Finite Time Convergence of Cyclic Coordinate Descent Methods}
\author{Ankan Saha\\
  Department of Computer Science\\
  University of Chicago \\
  \texttt{\small ankans@cs.uchicago.edu}
  \And Ambuj Tewari\\
  Toyota Technological Institute\\ 
  Chicago, USA \\
  \texttt{\small tewari@ttic.edu}
}
\begin{document}

\maketitle

\begin{abstract}

Cyclic coordinate descent is a classic optimization method that has
witnessed a resurgence of interest in machine learning. Reasons for
this include its simplicity, speed and stability, as well as its
competitive performance on $\ell_1$ regularized smooth optimization
problems.  Surprisingly, very little is known about its finite time
convergence behavior on these problems. Most existing results either
just prove convergence or provide asymptotic rates. We fill this gap
in the literature by proving $O(1/k)$ convergence rates (where $k$ is
the iteration counter) for two variants of cyclic coordinate descent
under an isotonicity assumption. Our analysis proceeds by comparing
the objective values attained by the two variants with each other, as
well as with the gradient descent algorithm. We show that the iterates
generated by the cyclic coordinate descent methods remain better than
those of gradient descent uniformly over time.

\end{abstract}

\section{Introduction}
\label{sec:Introduction}

The dominant paradigm in Machine Learning currently is to cast
learning problems as optimization problems. This is clearly borne out
by approaches involving empirical risk {\em minimization}, {\em maximum}
likelihood, {\em maximum} entropy, {\em minimum} description length,
etc. As machine learning faces ever increasing and high-dimensional
datasets, we are faced with novel challenges in
designing and analyzing optimization algorithms that can adapt efficiently to
such datasets. A mini-revolution of sorts is taking place where
algorithms that were ``slow'' or ``old'' from a purely optimization
point of view are witnessing a resurgence of interest. This paper
considers one such family of algorithms, namely the {\em
coordinate descent} methods. There has been recent work demonstrating
the potential of these algorithms for solving $\ell_1$-regularized
loss minimization problems:
\begin{equation}
\label{eq:lossl1reg}
\frac{1}{n} \sum_{i=1}^n \ell(x,Z_i) + \lambda \|x\|_1
\end{equation}
where $x$ is possibly high dimensional predictor that is being learned
from the samples $Z_i = (X_i, Y_i)$ consisting of input, output pairs,
$\ell$ is a convex loss function measuring prediction performance, and
$\lambda \ge 0$ is a ``regularization'' parameter.  The use of the
$\ell_1$ norm $\|x\|_1$ (sum of absolute values of $x_i$) as a
``penalty'' or ``regularization term'' is motivated by its sparsity
promoting properties and there is a large and growing literature
studying such issues (see, e.g., \cite{Tropp06} and references therein).  In this paper,
we restrict ourselves to analyzing the behavior of coordinate descent methods on
problems like~\eqref{eq:lossl1reg} above. The general idea behind
coordinate descent is to choose, at each iteration, an index $j$ and
change $x_j$ such that objective $F$ decreases.  Choosing $j$ can be
as simple as cycling through the coordinates or a more sophisticated
coordinate selection rule can be employed.  \cite{HastTib07} use
the cyclic rule which we analyze in this paper.

Our emphasis is on obtaining {\em finite time} rates, i.e. guarantees
about accuracy of iterative optimization algorithms that hold right
from the first iteration. This is in contrast to asymptotic guarantees
that only hold once the iteration count is ``large enough'' (and
often, what is meant by ``large enough'', is left unspecified). We
feel such an emphasis is in the spirit of Learning Theory that has
distinguished itself by regarding finite sample generalization bounds
as important. For our analysis, we abstract away the particulars of
the setting above, and view~\eqref{eq:lossl1reg} as a special case of
the convex optimization problem:
\begin{equation}
\label{eq:Reg_l_1_loss}
\min_{x \in \mathbb{R}^d}\ F(x) := f(x) + \lambda \|x\|_1 \ .
\end{equation}
In order to obtain finite time convergence rates, one must assume that
$f$ is ``nice'' is some sense. This can be quantified in different ways 
including assumptions of Lipschitz continuity, differentiability or
strong convexity. We will assume that $f$ is differentiable with a
Lipschitz continuous gradient. In the context of
problem~\eqref{eq:lossl1reg}, it amounts to assuming that the loss
$\ell$ is differentiable. Many losses, such as squared loss and
logistic loss, are differentiable. Our results therefore apply to
$\ell_1$ regularized squared loss (``Lasso'') and to $\ell_1$
regularized logistic regression.

For a method as old as cyclic coordinate descent, it is surprising
that little is known about finite time convergence even under
smoothness assumptions.  As far as we know, finite time results are
not available even when $\lambda = 0$. i.e. for unconstrained smooth
convex minimization problem. Given recent empirical successes of the
method, we feel that this gap in the literature needs to be filled
urgently.  In fact, this sentiment is shared in (\cite{WuLange08}) by
the authors who lamented, ``Better understanding of the convergence
properties of the algorithms is sorely needed."  They were talking
about greedy coordinate descent methods but their comment applies to
cyclic methods as well.

The situation with gradient descent methods is much better. There are
a variety of finite time convergence results available in the
literature (\cite{Nesterov03a}). Our strategy in this paper is to
leverage these results to shed some light on the convergence of
coordinate descent methods. We do this via a series of comparison
theorems that relate variants of coordinate descent methods to each
other and to the gradient descent algorithm. To do this, we make
assumptions both on the starting point and an additional {\em
isotonicity} assumption on the gradient of the function $f$. Since
finite time $O(1/k)$ accuracy guarantees are available for gradient
descent, we are able to prove the same rates for two variants of
cyclic coordinate descent.  Here $k$ is the iteration count and the
constants hidden in the $O(\cdot)$ notation are small and known.  We
feel it should be possible to relax, or even eliminate, the additional
assumptions we make (these are detailed in section \ref{sec:Analysis}) and doing this is
an important open problem left for future work.

We find it important to state at the outset that our aim here is not
to give the best possible rates for the
problem~\eqref{eq:Reg_l_1_loss}. For example, even among
gradient-based methods, faster $O(1/k^2)$ finite time accuracy bounds
can be achieved using Nesterov's celebrated 1983 method
(\cite{Nesterov83}) or its later variants. Instead, our goal is to
better understand cyclic coordinate descent methods and their
relationship to gradient descent.

\paragraph{Related Work}
Coordinate descent methods are quite old and we cannot attempt a
survey here. Instead, we refer the reader to \cite{Tseng01} and \cite{TseYun09a}
that summarize previous work and also present analyses for
coordinate descent methods. These consider cyclic coordinate descent as well as
versions that use more sophisticated coordinate selection rules. However,
as mentioned above, the analyses either establish convergence without
rates or give asymptotic rates that hold after sufficiently many
iterations have occurred. An exception is \cite{TseYun09} that
does give finite time rates but for a version of coordinate descent
that is not cyclic. Finite time guarantees for a greedy version
(choosing $j$ to be the coordinate of the current gradient with the
maximum value) also appear in \cite{Clarkson08}. The author essentially
considers minimizing a smooth convex function over the probability
simplex and also surveys previous work on greedy coordinate descent in
that setting. For finite time (expected) accuracy bounds for
stochastic coordinate descent (choose $j$ uniformly at random) for
$\ell_1$ regularization, see \cite{ShaiAmbuj09}.

We mentioned that the empirical success reported in \cite{HastTib07} was our
motivation to consider cyclic coordinate descent for $\ell_1$
regularized problems. They consider the Lasso problem:
\begin{equation}
\label{eq:lasso}
\min_{x \in \mathbb{R}^d}\ \frac{1}{2n} \| \mathbf{X}x - Y \|^2 + \lambda \|x\|_1\ ,
\end{equation}
where $\mathbf{X} \in \mathbb{R}^{n \times d}$ and
$Y \in \mathbb{R}^n$. In this case, the smooth part $f$ is a quadratic 
\begin{align}
        \label{eq:quadratic}
        f(x) = \tfrac{1}{2}\inner{Ax}{x} + \inner{b}{x}
\end{align} 
where $A = \mathbf{X}^\top\mathbf{X}$ and $b = -\mathbf{X}^\top Y$.  Note that
$A$ is symmetric and positive semidefinite. 
Cyclic coordinate descent has also been applied to the $\ell_1$-regularized logistic regression
problem~\citep{GenLewMad07}. Since the logistic loss is differentiable, this problem also falls
into the framework of this paper.

\paragraph{Outline}
Notation and necessary definitions are given in
section \ref{sec:Preliminaries}. The gradient descent algorithm along
with two variants of cyclic coordinate descent are presented in
section \ref{sec:Algorithms}. Section \ref{sec:Analysis} spells out
the additional assumptions on $f$ that our current analysis needs. It
also proves results comparing the iterates generated by the three
algorithms considered in the paper when they are all started from the
same point. Similar comparison theorems in the context of solving
a system of non-linear equations using Jacobi and Gauss-Seidel methods
appear in \cite{Rheinboldt70}.
The results in section \ref{sec:Analysis} set the stage for the main results given in
section \ref{sec:Rates}. This section converts the comparison between
iterates into a comparison between objective function values achieved
by the iterates. The finite time convergence rates of cyclic
coordinate descent are then inferred from rates for gradient descent.
There are plenty of issues that are still
unresolved. Section \ref{sec:Conclusion} discusses some of them and
provides a conclusion.

\section{Preliminaries and Notation}
\label{sec:Preliminaries}

We use the lowercase letters $x$, $y$, $z$, $g$ and $\gamma$ to refer 
to vectors throughout the paper. Normally parenthesized superscripts, 
like $x^{(k)}$ refer to vectors as well, whereas subscripts refer to 
the components of the corresponding vectors. For any positive integer 
$k$, $[k] := \{1,\ldots,k\}$. $\sgn(a)$ is the interval-valued sign 
function, i.e. $\sgn(a) = \{1\}$ or $\{-1\}$ corresponding to $a>0$ or $a<0$. For 
$a = 0$, $\sgn(a) = [-1,1]$.
Unless otherwise specified, $\| \cdot \|$ refers to the Euclidean norm
$\nbr{x} := \left(\sum_{i}x_{i}^{2}\right)^ {\frac{1}{2}}$,
$\|\cdot\|_1$ will denote the $l_1$ norm, $\nbr{x}_1 =
\left(\sum_i|x_i| \right)$, $\inner{\cdot}{\cdot}$ denotes the
Euclidean dot product $\inner{x}{y} = \sum_{i} x_{i}y_{i}$. Through out
the paper inequalities between vectors are to be interpreted component
wise \ie\ $x \ge y$ means that $x_i \ge y_i$ for all $i \in [d]$.
The following definition will be used extensively
in the paper:
\begin{definition}
  \label{def:lip-cont-grad}
  Suppose a function $f: \RR^d \to \RR$ is differentiable on $\RR^d$. 
  Then $f$ is said to have Lipschitz continuous gradient (\lcg) with
  respect to a norm $\|\cdot\|$ if there exists a constant $L$ such that
  \begin{align}
    \label{eq:lip-cont-grad}
    \| \nabla f(x) - \nabla f(x')\| \leq L \| x - x'\| \qquad
    \forall\ x, x'\in \RR^d.
  \end{align}
\end{definition}

An important fact (see, e.g., \cite[Thm. 2.1.5]{Nesterov03a}) we will
use is that if a function $f$ has Lipschitz continuous gradient with
respect to a norm $\|\cdot\|$, then it satisfies the following
generalized bounded Hessian property
  \begin{align}
    \label{eq:generalized_hessian}
    f(x) \leq f(x') + \inner{\grad f(x')}{x-x'} + \frac{L}{2}\|x-x'\|^2.
  \end{align}

An operator $T:\mathbb{R}^d \to \mathbb{R}$ is said to be {\em isotone} iff
\begin{equation}
\label{eq:isotone}
x \ge y \quad \Rightarrow \quad T(x) \ge T(y).
\end{equation}

An important isotone operator that we will frequently deal with is the
{\em shrinkage} operator $\Sbb_\tau:\mathbb{R}^d \to \mathbb{R}$
defined, for $\tau > 0$, as
\begin{align}
\label{eq:shrinkage}
[\Sbb_\tau(x)]_i := S_\tau(x_i)
\end{align}
where $S_\tau(a)$ is the scalar shrinkage operator:
\begin{equation}
\label{eq:scshrinkage}
S_\tau(a) :=
\begin{cases}
a - \tau & a > \tau \\
0 & a \in [-\tau,\tau] \\
a + \tau & a < -\tau.
\end{cases}
\end{equation}


\section{Algorithms}
\label{sec:Algorithms}

We will consider three iterative algorithms for solving the
minimization problem~\eqref{eq:Reg_l_1_loss}. All of them enjoy the
descent property: $F(x^{(k+1)}) \le F(x^{(k)})$ for successive iterates
$x^{(k)}$ and $x^{(k+1)}$.

\begin{algorithm}
\begin{algorithmic}
  \STATE Initialize: Choose an appropriate initial point $x^{(0)}$. 
  \FOR{$k=0,1,\ldots$}
  \STATE $x^{(k+1)} \leftarrow \Sbb_{\lambda/L}(x^{(k)} - \frac{\nabla f(x^{(k)})}{L})$
  \ENDFOR
\end{algorithmic}
\caption{Gradient Descent (\GD)}
\label{alg:gd}
\end{algorithm}

Algorithm~\ref{alg:gd}, known as Gradient Descent (\GD), is one of the
most common iterative algorithms used for convex optimization (See
\cite{BeckTeb09}, \cite{DucSing09} and references therein). It is
based on the idea that using corollary \eqref{eq:generalized_hessian}
to generate a linear approximation of $f$ at the current iterate
$x^{(k)}$, we can come up with the following global upper
approximation of $F$:
\[
	F(x) \leq f(x^{(k)}) + \inner{ \nabla f(x^{(k)}) }{ x -
          x^{(k)} } + \frac{L}{2}\| x - x^{(k)} \|^2 + \lambda \| x
        \|_1 \ .
\]
It is easy to show that the above approximation is minimized at $x =
\Sbb_{\lambda/L}(x^{(k)} - \nabla f(x^{(k)})/L)$ (\cite{BeckTeb09}). 
This is the next iterate for the \GD\ algorithm. We call it 
``Gradient Descent'' as it reduces to the following algorithm
\[
	x^{(k+1)} = x^{(k)} - \frac{\nabla f(x^{(k)})}{L}
\]
when there is no regularization (i.e. $\lambda = 0$). Finite time
convergence rate for the \GD\ algorithm are well known.

\begin{theorem}
\label{thm:gd}
Let $\cbr{ x^{(k)} }$ be a sequence generated by the \GD\ 
algorithm. Then, for any minimizer $x^\star$
of~\eqref{eq:Reg_l_1_loss}, and $\forall k \ge 1$,
\[
	F(x^{(k)}) - F(x^\star) \le \frac{ L \| x^\star - x^{(0)} \|^2}{2\,k}
\] 
\end{theorem}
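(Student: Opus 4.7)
The plan is to follow the standard ISTA-style analysis (as in \cite{BeckTeb09}), since Algorithm~\ref{alg:gd} is exactly the proximal gradient iteration for the splitting $F = f + \lambda\|\cdot\|_1$. Write $p(y) := \Sbb_{\lambda/L}(y - \nabla f(y)/L)$, so that $x^{(k+1)} = p(x^{(k)})$, and recall that $p(y)$ is the unique minimizer of the quadratic upper model
\[
Q_y(x) := f(y) + \inner{\nabla f(y)}{x-y} + \tfrac{L}{2}\|x-y\|^2 + \lambda\|x\|_1.
\]
The entire argument is driven by a one-step inequality comparing $F(p(y))$ to $F(x)$ for an arbitrary point $x$, which I will then specialize to $x=x^\star$ (to get a telescoping bound) and to $x=y$ (to get monotonicity of $F(x^{(k)})$).

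First I would derive the one-step inequality. Using~\eqref{eq:generalized_hessian} with $x = p(y)$ and $x' = y$ gives the upper bound $F(p(y)) \le Q_y(p(y))$. For the lower bound on $F(x)$, I combine the convexity of $f$ with a subgradient inequality for $\|\cdot\|_1$: by the optimality condition for the minimizer $p(y)$ of $Q_y$, there exists $g \in \partial \|p(y)\|_1$ with $\nabla f(y) + L(p(y)-y) + \lambda g = 0$. Plugging this $g$ into the subgradient inequality yields
\[
F(x) \ge f(y) + \inner{\nabla f(y)}{p(y)-y} + \lambda\|p(y)\|_1 + L\inner{y - p(y)}{x - p(y)}.
\]
Subtracting the lower bound from the upper bound and completing the square in the inner product term collapses the right-hand side to
\[
F(p(y)) - F(x) \le \tfrac{L}{2}\bigl(\|x-y\|^2 - \|x-p(y)\|^2\bigr). \tag{$\ast$}
\]

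Given $(\ast)$ the rest is mechanical. Setting $x = y = x^{(k)}$ and $p(y) = x^{(k+1)}$ shows $F(x^{(k+1)}) \le F(x^{(k)})$, so the sequence of objective values is non-increasing. Setting $x = x^\star$ and $y = x^{(k)}$ gives
\[
F(x^{(k+1)}) - F(x^\star) \le \tfrac{L}{2}\bigl(\|x^\star - x^{(k)}\|^2 - \|x^\star - x^{(k+1)}\|^2\bigr),
\]
and summing this telescopically from $k=0$ to $k-1$ bounds $\sum_{j=1}^{k} \bigl(F(x^{(j)}) - F(x^\star)\bigr)$ by $\tfrac{L}{2}\|x^\star - x^{(0)}\|^2$. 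Finally, since $F(x^{(k)}) - F(x^\star)$ is itself non-increasing, it is at most the average of the $k$ terms in the sum, yielding the claimed $L\|x^\star - x^{(0)}\|^2/(2k)$ bound.

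The only real subtlety is establishing $(\ast)$: one must pick the correct subgradient of $\|\cdot\|_1$ (the one produced by the optimality condition for $p(y)$) rather than an arbitrary one, and then notice that the cross term combines with $\|p(y)-y\|^2$ to form $\|x-y\|^2 - \|x-p(y)\|^2$ exactly. Everything else — monotonicity, telescoping, averaging — is routine once $(\ast)$ is in hand.
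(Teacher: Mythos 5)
Your proof is correct and is essentially the argument the paper relies on: the paper does not prove Theorem~\ref{thm:gd} itself but cites \cite[Thm.~3.1]{BeckTeb09}, whose ISTA analysis is exactly your one-step inequality $(\ast)$, followed by telescoping and monotonicity-plus-averaging. The derivation of $(\ast)$ via the subgradient produced by the optimality condition for $p(y)$ is handled correctly, so there is nothing to add.
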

The above theorem can be found in, e.g., \cite[Thm. 3.1]{BeckTeb09}.

\begin{algorithm}
\begin{algorithmic}
  \STATE Initialize: Choose an appropriate initial point $y^{(0)}$.
  \FOR{$k=0,1,\ldots$} 
  \STATE $y^{(k,0)} \leftarrow y^{(k)}$ 
  \FOR{$j=1$ to $d$}
  \STATE $y^{(k,j)}_j \leftarrow S_{\lambda/L}(y^{(k,j-1)}_j - [\nabla
    f(y^{(k,j-1)})]_j \,/\, L)$ 
  \STATE $\forall i\neq j$, $y^{(k,j)}_i \leftarrow y^{(k,j-1)}_i$ 
  \ENDFOR 
  \STATE $y^{(k+1)} \leftarrow y^{(k,d)}$ 
  \ENDFOR
\end{algorithmic}
\caption{Cyclic Coordinate Descent (\CCD)}
\label{alg:ccd}
\end{algorithm}

The second algorithm, Cyclic Coordinate Descent (\CCD), instead of
using the current gradient to update all components simultaneously,
goes through them in a cyclic fashion. The next ``outer'' iterate
$y^{(k+1)}$ is obtained from $y^{(k)}$ by creating a series of $d$
intermediate or ``inner'' iterates $y^{(k,j)}$, $j \in [d]$, where
$y^{(k,j)}$ differs from $y^{(k,j-1)}$ only in the $j$th coordinate
whose value can be found by minimizing the following one-dimensional 
over-approximation of $F$ over the scalar $\alpha$:
\begin{equation}
\label{eq:1d_over_approx}
f(y^{(k,j-1)}) + \lambda \sum_{i \neq j} |y^{(k,j-1)}_i| + [
  \nabla f(y^{(k,j-1)}) ]_j \cdot (\alpha - y^{(k,j-1)}_j) +
\frac{L}{2} (\alpha - y^{(k,j-1)})_j^2 + \lambda |\alpha|\ .
\end{equation}
It can again be verified that the above minimization has the closed
form solution 
\begin{align*}
  \alpha = S_{\lambda/L}\rbr{y^{(k,j-1)}_j - 
    \frac{[\grad f(y^{(k,j-1)})]_j}{L}}
\end{align*}
which is what \CCD\ chooses $y^{(k,j)}_j$ to be. Once all coordinates
have been cycled through, $y^{(k+1)}$ is simply set to be
$y^{(k,d)}$. Let us point out that in an actual implementation, the
inner iterates $y^{(k,j)}$ would not be computed separately but
$y^{(k)}$ would be updated ``in place''. For analysis purposes, it is
convenient to give names to the intermediate iterates. Note that for all 
$j \in \cbr{0,1,\hdots,d}$, the inner iterate looks like
\[
y^{(k,j)} = \sbr{y^{(k+1)}_1, \hdots,y^{(k+1)}_j,y^{(k)}_{j+1},\hdots, y^{(k)}_d}\ .
\]

In the \CCD\ algorithm updating the $j$th coordinate uses the newer
gradient value $\grad f(y^{(k,j-1)})$ rather than $\grad f(y^{(k)})$
which is used in \GD. This makes \CCD\ inherently sequential. In
contrast, different coordinate updates in \GD\ can easily be done by
different processors in parallel. However, on a single processor, we
might hope \CCD\ converges faster than \GD\ due to the use of ``fresh''
information. Therefore, it is natural to expect that \CCD\ should enjoy the
finite time convergence rate given in Theorem~\ref{thm:gd} ( or
better). We show this is indeed the case under an {\em isotonicity
  assumption} stated in Section~\ref{sec:Analysis} below. Under the
assumption, we are actually able to show the correctness of the
intuition that \CCD\ should converge faster than \GD.
\begin{algorithm}
\begin{algorithmic}
\STATE Initialize: Choose an appropriate initial point $z^{(0)}$.
\FOR{$k=0,1,\ldots$} 
\STATE $z^{(k,0)} \leftarrow z^{(k)}$ 
\FOR{$j=1$ to $d$}
\STATE $z^{(k,j)}_j \leftarrow \argmin_{\alpha}\ F(z^{(k,j-1)}_1,\ldots,
z^{(k,j-1)}_{j-1},\alpha,z^{(k,j-1)}_{j+1},\ldots,z^{(k,j-1)}_d)$
\STATE $\forall i\neq j$, $z^{(k,j)}_i \leftarrow z^{(k,j-1)}_i$ 
\ENDFOR 
\STATE $z^{(k+1)} \leftarrow z^{(k,d)}$ 
\ENDFOR
\end{algorithmic}
\caption{Cyclic Coordinate Minimization}
\label{alg:ccm}
\end{algorithm}

The third and final algorithm that we consider is Cyclic Coordinate
Minimization (\CCM). The only way it differs from \CCD\ is that instead
of minimizing the one-dimensional
over-approximation~\eqref{eq:1d_over_approx}, it chooses $z^{(k,j)}_j$
to minimize,
\[
F(z^{(k,j-1)}_1,\ldots,z^{(k,j-1)}_{j-1},\alpha,z^{(k,j-1)}
_{j+1},\ldots,z^{(k,j-1)}_d)
\]
over $\alpha$. In a sense, \CCM\ is not actually an algorithm as it does 
not specify how to minimize $F$ for any arbitrary smooth function $f$. 
An important case when the minimum can be computed exactly is when $f$ is
quadratic as in \eqref{eq:quadratic}. In that case, we have
\[
	z^{(k,j)}_j = S_{\lambda/A_{j,j}}
	\left(
		z^{(k,j-1)}_j - \frac{[Az^{(k,j-1)} + b]_j}{A_{j,j}}
	\right)\ .
\] 
If there is no closed form solution, then we might have to
resort to numerical minimization in order to implement \CCM. This is
usually not a problem since one-dimensional convex functions can be
minimized numerically to an extremely high degree of accuracy in a few
steps. For the purpose of analysis, we will assume that
an exact minimum is found. Again, intuition suggests that the accuracy
of \CCM\ after any fixed number of iterations should be better than that
of \CCD\ since \CCD\ only minimizes an over-approximation. Under the same
isotonicity assumption that we mentioned above, we can show that this
intuition is indeed correct.

We end this section with a cautionary remark regarding terminology. In
the literature, \CCM\ appears much more frequently than \CCD\ and it is
actually the former that is often referred to as ``Cyclic Coordinate
Descent'' (See \cite{HastTib07} and references therein). Our reasons
for considering \CCD\ are: (i) it is a nice, efficient alternative to
\CCM, and (ii) a stochastic version of \CCD (where the coordinate to
update is chosen randomly and not cyclically) is already known to
enjoy finite time $O(1/k)$ expected convergence rate
(\cite{ShaiAmbuj09}).

\section{Analysis}
\label{sec:Analysis}

We already mentioned the known convergence rate for \GD\ 
(Theorem~\ref{thm:gd}) above. Before delving into the analysis, it is
necessary to state an assumption on $f$ which accompanied by
appropriate starting conditions results in particularly interesting
properties of the convergence behavior of \GD, as described in lemma
\ref{lem:GD_compare}. The \GD\ algorithm generates iterates by applying
the operator
\begin{align}
  \label{eq:GD_operator}
  T_{\GD}(x) := \Sbb_{\lambda/L}\left( x - \frac{\nabla f(x) }{ L} \right)
\end{align}
repeatedly. It turns out that if $T_{\GD}$ is an isotone operator then
the \GD\ iterates satisfy lemma \ref{lem:GD_compare} which is
essential for our convergence analysis. The above operator is a
composition of $\Sbb_{\lambda/L}$, an isotone operator, and $\Ib -
\nabla f/L$ (where $\Ib$ denotes the identity operator). To ensure
overall isotonicity, it suffices to assume that $\Ib - \nabla f/L$ is
isotone. This is formally stated as:
\begin{assumption}
The operator
$
	x \mapsto x - \frac{\nabla f(x)}{L}
$
is isotone.
\end{assumption}

Similar assumptions appear in the literature comparing Jacobi and Gauss-Seidel methods
for solving linear equations~\cite[Chap. 2]{BertTsit89}. When the function $f$ is
quadratic as in~\eqref{eq:quadratic}, our assumption is equivalent to assuming that the off-diagonal
entries in $A$ are non-positive, i.e. $A_{i,j} \le 0$ for all $i\neq j$. For a general smooth $f$, the
following condition is sufficient to make the assumption true: $f$ is twice-differentiable and the Hessian
$\nabla^2f(x)$ at any point $x$ has non-positive off-diagonal entries.

In the next few subsections, we will see how the isotonicity
assumption leads to an isotonically decreasing (or increasing)
behavior of \GD, \CCD\ and \CCM\ iterates under appropriate starting
conditions. To specify what these starting conditions are, we need the
notions of super- and subsolutions.

\begin{definition}
A vector $x$ is a supersolution iff
$
	x \ge \Sbb_{\lambda}\left( x - \nabla f(x) \right)
$.
Analogously, $x$ is a subsolution iff
$
	x \le \Sbb_{\lambda}\left( x - \nabla f(x) \right)
$. 
\end{definition}

Since the inequalities above are vector inequalities, an arbitrary $x$
may neither be a supersolution nor a subsolution. The names
``supersolution'' and ``subsolution'' are justified because equality
holds in the definitions above, \ie\
$
	x = \Sbb_{\lambda}\left( x - \nabla f(x) \right)
$
iff $x$ is a minimizer of $F$. To see this, note that subgradient optimality conditions
say that $x$ is a minimizer of $F = f + \lambda \| \cdot \|_1$ iff 
for all $j \in [d]$ 
\begin{align}
  \label{eq:optimality_cond}
  0 \in [\nabla f(x)]_j + \lambda \sgn(x_j)\ .
\end{align}
Further, it is easy to see that, 
\begin{align}
  \label{eq:equivalence}
\forall a,b \in \mathbb{R},\ \tau >0, \qquad
  0 \in b + \lambda \sgn(a) \qquad \Leftrightarrow \qquad a =
  S_{\lambda/\tau}(a - b/\tau)
\end{align}
We prove a couple of properties of super- and subsolutions that will
prove useful later. The first property refers to the scale invariance
of the definition of super- and subsolutions
and the second property is the monotonicity of a single variable function.
\begin{lemma}
\label{lem:scale}
If for any $\tau > 0$,
\begin{align}
  \label{eq:scale}
	x \ge \Sbb_{\lambda/\tau}\left( x - \frac{\nabla f(x)}{\tau} \right)
\end{align}
then $x$ is a supersolution. If $x$ is a supersolution then the above
inequality holds for all $\tau > 0$.

Similarly, if for any $\tau > 0$,
\[
	x \le \Sbb_{\lambda/\tau}\left( x - \frac{\nabla f(x)}{\tau}
        \right)
\]
then $x$ is a subsolution. If $x$ is a subsolution then the above
inequality holds for all $\tau > 0$.
\end{lemma}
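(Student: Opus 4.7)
The plan is to reduce scale invariance of~\eqref{eq:scale} to a coordinatewise statement that is manifestly free of $\tau$. Since $\Sbb_{\lambda/\tau}$ acts componentwise and vector inequalities are componentwise, \eqref{eq:scale} holds iff, for every $j \in [d]$, the scalar inequality
\[
  a \ge S_{\lambda/\tau}(a - b/\tau) \qquad \text{with } a := x_j,\ b := [\nabla f(x)]_j
\]
holds. I would therefore focus on an arbitrary coordinate and show that this scalar inequality admits a characterization that does not mention $\tau$.

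The mechanism is a short case split. Splitting $S_{\lambda/\tau}(a - b/\tau)$ into the three branches of~\eqref{eq:scshrinkage} and intersecting with the sign cases $a > 0$, $a = 0$, $a < 0$, a direct calculation shows that the scalar inequality is equivalent to
\[
  \begin{cases} b \ge -\lambda & \text{if } a \ge 0, \\ b \ge \lambda & \text{if } a < 0. \end{cases}
\]
(Equivalently, there exists $s \in \sgn(a)$ with $b + \lambda s \ge 0$, which is the natural inequality analogue of~\eqref{eq:equivalence}.) The key point is that this right-hand side is independent of $\tau$. Consequently, if \eqref{eq:scale} holds for some $\tau_0 > 0$, it holds for all $\tau > 0$; and specializing to $\tau = 1$ recovers the definition of supersolution. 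Both directions of the supersolution half of the lemma follow.

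For the subsolution half, I would rerun exactly the same region/sign case analysis on the reversed inequality $a \le S_{\lambda/\tau}(a - b/\tau)$ to obtain the $\tau$-free characterization $b \le -\lambda$ when $a > 0$ and $b \le \lambda$ when $a \le 0$, which again reduces to subsolution at $\tau = 1$. Alternatively, one may invoke the sign-flip symmetry $\Sbb_\lambda(-u) = -\Sbb_\lambda(u)$ and apply the supersolution result to $-x$ together with $\tilde f(z) := f(-z)$. The only real obstacle is the bookkeeping in the $3 \times 3$ case split; it is fully mechanical but requires some care to check each sub-case cleanly (in particular, verifying that in the ``extreme'' sub-cases where the inequality holds automatically, the derived bound on $b$ is already implied by being in that region).
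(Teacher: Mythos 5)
Your proposal is correct and follows essentially the same route as the paper: both reduce the claim to a coordinatewise scalar statement and show that whether $x_j \ge S_{\lambda/\tau}\left(x_j - [\nabla f(x)]_j/\tau\right)$ holds is independent of $\tau$, so that specializing to $\tau=1$ gives the supersolution definition. The only difference is presentational --- the paper runs the case analysis graphically (figures comparing the shrinkage graph to $y=x_j$, organized by where the interval $[[\nabla f(x)]_j-\lambda,\ [\nabla f(x)]_j+\lambda]$ sits relative to zero), whereas you extract the explicit $\tau$-free characterization ($[\nabla f(x)]_j \ge -\lambda$ when $x_j \ge 0$, and $[\nabla f(x)]_j \ge \lambda$ when $x_j < 0$), which checks out and is, if anything, a crisper way to make the same point.
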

\begin{proof}
  See Appendix \ref{sec:scale_proof}
\end{proof}

\begin{lemma}
\label{lem:monotonic}
If $x$ is a supersolution (resp. subsolution) then for any $j$, the
function
\[
	\tau \mapsto S_{\lambda/\tau}\left( x_j - \frac{[\nabla
            f(x)]_j}{\tau} \right)
\]
is monotonically nondecreasing (resp. nonincreasing).
\end{lemma}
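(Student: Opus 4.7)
The plan is to fix a coordinate $j$, set $a := x_j$ and $b := [\nabla f(x)]_j$, and study the scalar map $g(\tau) := S_{\lambda/\tau}(a - b/\tau)$. By Lemma~\ref{lem:scale} the supersolution property for $x$ is equivalent, component-wise, to the scalar inequality $a \ge g(\tau)$ holding for \emph{every} $\tau > 0$, not just at the single value $\tau = 1$ appearing in the definition. This uniform upper bound on $g$ is what I will exploit.

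First I would unpack $g(\tau)$ by the three cases of the scalar shrinkage $S_{\lambda/\tau}$: $g(\tau) = a - (b+\lambda)/\tau$ when $a\tau > b + \lambda$; $g(\tau) = a - (b-\lambda)/\tau$ when $a\tau < b - \lambda$; and $g(\tau) = 0$ otherwise. In each of the first two regimes $g$ is smooth in $\tau$ with derivative $(b+\lambda)/\tau^2$ and $(b-\lambda)/\tau^2$ respectively, and in the third regime $g'(\tau) = 0$. The crucial step is then to read off a sign constraint from the supersolution bound: in the first regime $g(\tau) \le a$ immediately gives $(b+\lambda)/\tau \ge 0$ and hence $b + \lambda \ge 0$, while in the second regime the analogous rearrangement forces $b - \lambda \ge 0$. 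Thus the derivative of $g$ is nonnegative on every piece.

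To conclude I would note that $g$ is continuous on $(0,\infty)$ (being a composition of continuous operations in $\tau$), and that the domain is partitioned into at most three intervals on which $g$ is smooth with nonnegative derivative; a continuous, piecewise smooth function with nonnegative pieces is nondecreasing. The subsolution case is completely symmetric: the reversed inequality $a \le g(\tau)$ flips each of the sign conditions on $b \pm \lambda$, producing nonpositive piecewise derivatives and hence a nonincreasing $g$.

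I do not expect a substantive obstacle: the argument is essentially bookkeeping on the three regimes of $S_{\lambda/\tau}$, and case boundaries cause no trouble because $g$ is continuous through them. The only real content is the observation that because Lemma~\ref{lem:scale} promotes the supersolution inequality to one valid for \emph{all} $\tau > 0$, the inequality $a \ge g(\tau)$ translates, inside each regime, directly into sign information about $b + \lambda$ or $b - \lambda$, which is exactly the sign that determines $g'$.
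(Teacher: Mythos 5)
Your proof is correct, and it reaches the conclusion by a genuinely different mechanism than the paper's. The paper fixes the two parameters $\tau_1 \ge \tau_2$, views $x_j \mapsto S_{\lambda/\tau}(x_j - [\nabla f(x)]_j/\tau)$ as a piecewise-linear graph with hinge points at $([\nabla f(x)]_j \pm \lambda)/\tau$, and argues pictorially (Figures 4--5) that rescaling by a larger $\tau$ shifts these hinges toward the origin so that the $\tau_1$-graph dominates the $\tau_2$-graph on the relevant part of the domain; the case split is on where the interval $[[\nabla f(x)]_j - \lambda,\, [\nabla f(x)]_j + \lambda]$ sits relative to zero, and the supersolution hypothesis is used to discard the case where it lies entirely to the left of zero and to restrict to $x_j \ge 0$ in the straddling case. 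You instead fix the point and differentiate in $\tau$: on each of the (at most three) intervals of $(0,\infty)$ determined by which branch of $S_{\lambda/\tau}$ is active, $g'(\tau)$ equals $(b+\lambda)/\tau^2$, $(b-\lambda)/\tau^2$, or $0$, and the uniform-in-$\tau$ supersolution inequality $a \ge g(\tau)$ from Lemma~\ref{lem:scale}, evaluated at any point of a nonempty regime, forces exactly the sign $b+\lambda \ge 0$ or $b-\lambda \ge 0$ needed to make that piece's derivative nonnegative; continuity across the regime boundaries (where $g$ equals $0$ from both sides) then yields global monotonicity. Your route buys a fully analytic, figure-free argument that localizes the case analysis to the shrinkage branches rather than to the global position of the dead zone, and it makes explicit that the only input from the supersolution hypothesis is a sign constraint on $b \pm \lambda$ on each active branch; the paper's route is arguably more visual but leans on the figures for the actual inequality. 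Both are valid proofs of the same statement.
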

\begin{proof}
  See Appendix \ref{sec:monotonic_proof}
\end{proof}

\subsection{Gradient Descent}
\label{subsec:GD}

\begin{lemma}
  \label{lem:GD_compare}
  If $x^{(0)}$ is a supersolution and $\cbr{x^{(k)}}$ is the sequence
  of iterates generated by the \GD\ algorithm then $\forall k \geq 0$,
  \begin{align*}
    1)\quad x^{(k+1)} &\leq x^{(k)}
    &
    2)\quad x^{(k)} \text{ is a supersolution}
  \end{align*}
  If $x^{(0)}$ is a subsolution and $\cbr{x^{(k)}}$ is the sequence
  of iterates generated by the \GD\ algorithm then $\forall k \geq 0$,
  \begin{align*}
    1)\quad x^{(k+1)} &\geq x^{(k)}
    &
    2) \quad x^{(k)} \text{ is a subsolution}
  \end{align*}
\end{lemma}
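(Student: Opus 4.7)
The plan is to prove both statements jointly by induction on $k$, leveraging the scale-invariance characterization of supersolutions (Lemma \ref{lem:scale}) and the isotonicity of the operator $T_{\GD}$ introduced in \eqref{eq:GD_operator}. The two claims are intertwined: monotonicity of successive iterates follows from one iterate being a supersolution, while preserving the supersolution property requires monotonicity plus isotonicity of $T_{\GD}$. So I will carry them together in the inductive hypothesis.

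For the base case $k=0$, statement (2) holds by assumption. For statement (1), apply Lemma \ref{lem:scale} with $\tau = L$: the supersolution property for $x^{(0)}$ is equivalent to
\[
x^{(0)} \;\ge\; \Sbb_{\lambda/L}\!\left(x^{(0)} - \tfrac{\nabla f(x^{(0)})}{L}\right) \;=\; T_{\GD}(x^{(0)}) \;=\; x^{(1)}.
\]
For the inductive step, assume $x^{(k)}$ is a supersolution and $x^{(k)} \le x^{(k-1)}$. The same scale-invariance argument gives $x^{(k+1)} = T_{\GD}(x^{(k)}) \le x^{(k)}$, which is statement (1). To obtain statement (2), I need $x^{(k+1)} \ge T_{\GD}(x^{(k+1)})$. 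By what we just showed, $x^{(k+1)} \le x^{(k)}$; applying the isotonicity of $T_{\GD}$ (guaranteed because $\Sbb_{\lambda/L}$ is isotone and $\Ib - \nabla f/L$ is isotone by assumption) yields
\[
T_{\GD}(x^{(k+1)}) \;\le\; T_{\GD}(x^{(k)}) \;=\; x^{(k+1)}.
\]
By Lemma \ref{lem:scale} (with $\tau = L$), this is exactly the statement that $x^{(k+1)}$ is a supersolution, closing the induction.

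The subsolution case is entirely symmetric: reverse every inequality, use the other half of Lemma \ref{lem:scale}, and use isotonicity (which preserves $\le$ just as it preserves $\ge$) in exactly the same way. The main ``obstacle,'' such as it is, is really just recognizing that the two conclusions must be bundled into a single inductive hypothesis, since neither alone propagates: monotonicity alone does not recover the fixed-point inequality needed to certify a supersolution, and the supersolution property alone does not compare $x^{(k+1)}$ with $x^{(k)}$ without invoking Lemma \ref{lem:scale}. Once isotonicity of $T_{\GD}$ is in hand, the argument is a one-line application in each direction.
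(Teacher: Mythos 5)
Your proof is correct and follows essentially the same route as the paper: both arguments rest on the equivalence (via Lemma~\ref{lem:scale} with $\tau=L$) between ``$x$ is a supersolution'' and ``$T_{\GD}(x)\le x$,'' and then propagate this fixed-point inequality by applying the isotone operator $T_{\GD}$ inductively. The only cosmetic difference is that the paper inducts on the single statement $T_{\GD}(x^{(k)})\le x^{(k)}$ and reads off both conclusions from it, whereas you carry the two conclusions as a bundled hypothesis (and in fact never need the monotonicity part $x^{(k)}\le x^{(k-1)}$ in your inductive step).
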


\begin{proof}
  We only prove the supersolution case. The proof for the subsolution
  case is analogous. We start with a supersolution $x^{(0)}$.
  Consider the operator
  \begin{align*}
    T_{\GD}(x) := \Sbb_{\lambda/L}\left(x - \frac{\grad f(x)}{L}\right)
  \end{align*}
  given by \eqref{eq:GD_operator}. By the isotonicity assumption,
  $T_{\GD}$ is an isotone operator. We will prove by induction that
  $T_{\GD}(x^{(k)}) \le x^{(k)}$. This proves that $x^{(k+1)} \le
  x^{(k)}$ since $x^{(k+1)} = T_{\GD}(x^{(k)})$. Using lemma
  \ref{lem:scale}, the second claim follows by the
  definition of the $T_{\GD}$ operator.

  The base case $T_{\GD}(x^{(0)}) \le x^{(0)}$ is true by
  Lemma~\ref{lem:scale} since $x^{(0)}$ is given to be a
  supersolution. Now assume $T_{\GD}(x^{(k)}) \le x^{(k)}$. Applying
  the isotone operator $T_{\GD}$ on both sides we get
  $T_{\GD}(T_{\GD}(x^{(k)})) \le T_{\GD}(x^{(k)})$. This is the same
  as $T_{\GD}(x^{(k+1)}) \le x^{(k+1)}$ by definition of $x^{(k+1)}$ which completes 
  our inductive claim.

\end{proof}

\subsection{Cyclic Coordinate Descent (\CCD)}
\label{subsec:CCD}

\begin{lemma}
  \label{lem:CCD_Compare}
  If $y^{(0)}$ is a supersolution and $\cbr{y^{(k)}}$ is the sequence
  of iterates generated by the \CCD\ algorithm then $\forall k \geq 0$,
  \begin{align*}
  1) \quad y^{(k+1)} &\leq y^{(k)}
  &
  2) \quad y^{(k)} \text{ is a supersolution}
  \end{align*}
  If $y_0$ is a subsolution and $\cbr{y^{(k)}}$ is the sequence
  of iterates generated by the \CCD\ algorithm then $\forall k \geq 0$,
  \begin{align*}
  1)\quad y^{(k+1)} &\geq y^{(k)}
  &
  2)\quad y^{(k)} \text{ is a subsolution}
  \end{align*}
\end{lemma}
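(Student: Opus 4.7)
The plan is to mimic the \GD\ argument from Lemma \ref{lem:GD_compare}, but replace the single outer induction by a nested induction that marches through the inner iterates $y^{(k,j)}$ for $j=0,1,\ldots,d$. I will only treat the supersolution case explicitly; the subsolution case is entirely symmetric. The outer induction hypothesis is that $y^{(k)}$ is a supersolution, and the goal of the outer step is to show that $y^{(k+1)}=y^{(k,d)}$ is both a supersolution and satisfies $y^{(k+1)}\le y^{(k)}$. These two facts will both drop out of the inner induction, whose hypothesis is that $y^{(k,j-1)}$ is a supersolution, and whose conclusion is that $y^{(k,j)}$ is a supersolution with $y^{(k,j)}\le y^{(k,j-1)}$.

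For the inner step, first I would use Lemma \ref{lem:scale} with $\tau=L$ applied to the supersolution $y^{(k,j-1)}$ to conclude $y^{(k,j-1)}_j \ge S_{\lambda/L}(y^{(k,j-1)}_j-[\nabla f(y^{(k,j-1)})]_j/L)=y^{(k,j)}_j$; since the other coordinates are untouched this gives $y^{(k,j)}\le y^{(k,j-1)}$. Next, to verify that $y^{(k,j)}$ is again a supersolution, I would apply the operator $T_{\GD}$ (which is isotone by the isotonicity assumption, being the composition of $\Sbb_{\lambda/L}$ with $\Ib-\nabla f/L$) to this inequality. For coordinates $i\neq j$, this yields
\[
S_{\lambda/L}\bigl(y^{(k,j)}_i-[\nabla f(y^{(k,j)})]_i/L\bigr)\;\le\;S_{\lambda/L}\bigl(y^{(k,j-1)}_i-[\nabla f(y^{(k,j-1)})]_i/L\bigr)\;\le\;y^{(k,j-1)}_i=y^{(k,j)}_i,
\]
where the second inequality uses the inductive hypothesis that $y^{(k,j-1)}$ is a supersolution. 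For the coordinate $i=j$ the same chain gives
\[
S_{\lambda/L}\bigl(y^{(k,j)}_j-[\nabla f(y^{(k,j)})]_j/L\bigr)\;\le\;S_{\lambda/L}\bigl(y^{(k,j-1)}_j-[\nabla f(y^{(k,j-1)})]_j/L\bigr)\;=\;y^{(k,j)}_j,
\]
by the very definition of the \CCD\ update. Lemma \ref{lem:scale} then lets me conclude that $y^{(k,j)}$ is a supersolution.

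Chaining the inner induction from $j=1$ to $j=d$ yields $y^{(k,d)}\le y^{(k,0)}$, i.e.\ $y^{(k+1)}\le y^{(k)}$, and establishes that $y^{(k+1)}$ is a supersolution, closing the outer induction. The main subtlety—really the only non-formal point—is the second part of the inner step: the gradient used in the supersolution condition for $y^{(k,j)}$ is $\nabla f(y^{(k,j)})$, not the older gradient $\nabla f(y^{(k,j-1)})$ used to define the coordinate update. This is precisely where the isotonicity assumption pulls its weight, by letting me replace the newer gradient by the older one (at the cost of an inequality in the right direction) via monotonicity of $T_{\GD}$ applied to $y^{(k,j)}\le y^{(k,j-1)}$. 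The subsolution case is identical with all inequalities reversed.
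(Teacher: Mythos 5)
Your proof is correct and follows essentially the same route as the paper's: a nested induction over outer iterations $k$ and inner coordinates $j$, with the isotone operator $\Sbb_{\lambda/L}\circ(\Ib-\nabla f/L)$ applied to $y^{(k,j)}\le y^{(k,j-1)}$ doing exactly the work of trading the newer gradient $\nabla f(y^{(k,j)})$ for the older one used in the update. The only (harmless) difference is that you obtain $y^{(k,j)}\le y^{(k,j-1)}$ directly from Lemma~\ref{lem:scale} applied to the supersolution $y^{(k,j-1)}$, whereas the paper routes that step through a comparison with $y^{(k)}$; your version is marginally more direct.
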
 

\begin{proof}
  We will only prove the supersolution case as the subsolution proof
  is analogous. We start with a supersolution $y^{(0)}$. We
  will prove the following: If $y^{(k)}$ is a supersolution then,
  \begin{equation}
  \label{eq:claim1}
  y^{(k+1)} \le y^{(k)} \ ,
  \end{equation}
  \begin{equation}
  \label{eq:claim2}
  y^{(k+1)} \text{ is a supersolution}
  \end{equation}
  Then the lemma follows by induction on $k$. Let us make the
  induction assumption that $y^{(k)}$ is a supersolution and try to
  prove~\eqref{eq:claim1} and~\eqref{eq:claim2}. To prove these, we
  will show that $y^{(k,j)} \le y^{(k)}$ and $y^{(k,j)}$ is a
  supersolution by induction on $j \in \cbr{0,1,\hdots,d}$. This
  proves \eqref{eq:claim1} and \eqref{eq:claim2} for $y^{(k+1)}$ since
  $y^{(k+1)} = y^{(k,d)}$. 

  For the base case ($j=0$) of the
  induction, note that $y^{(k,0)} \le y^{(k)}$ is trivial since the
  two vectors are equal. For the same reason, $y^{(k,0)}$ is a
  supersolution since we have assumed $y^{(k)}$ to be a supersolution.
  Now assume $y^{(k,j-1)} \le y^{(k)}$ and $y^{(k,j-1)}$ is a
  supersolution for some $j > 0$. We want to show that $y^{(k,j)} \le
  y^{(k)}$ and $y^{(k,j)}$ is a supersolution.

  Since $y^{(k,j-1)}$ and $y^{(k,j)}$ differ only in the $j$th
  coordinate, to show that $y^{(k,j)} \le y^{(k)}$ given $y^{(k,j-1)}
  \le y^{(k)}$, it suffices to show that $y^{(k,j)} \le y^{(k,j-1)}$, i.e.
  \begin{equation}
  \label{eq:jthentry}
	y^{(k,j)}_j \le y^{(k,j-1)}_j = y^{(k)}_j \ .
  \end{equation}
  Since $y^{(k,j-1)} \le y^{(k)}$ applying the isotone operator $\Ib -
  \nabla f/L$ on both sides and taking the $j$th coordinate gives,
  \[
   	y^{(k,j-1)}_j - \frac{ [\nabla f(y^{(k,j-1)})]_j }{L} \le
        y^{(k)}_j - \frac{ [\nabla f(y^{(k)})]_j }{L}
  \]
  Applying the scalar shrinkage operator on both sides gives,
  \begin{align*}
    S_{\lambda/L}\left(y^{(k,j-1)}_j - \frac{ [\nabla
        f(y^{(k,j-1)})]_j }{L}\right) &\le
    S_{\lambda/L}\left(y^{(k)}_j - \frac{ [\nabla f(y^{(k)})]_j
    }{L}\right)
    \le y^{(k)}_j
  \end{align*}
  The left hand side is $y^{(k,j)}_j$ by definition while the second
  inequality follows because $y^{(k)}$ is a supersolution.  Thus, we
  have proved~\eqref{eq:jthentry}.
  
  Now we prove that $y^{(k,j)}$ is a supersolution. Note that we have already shown $y^{(k,j)} \le
  y^{(k,j-1)}$. Applying the isotone operator $\Ib - \frac{\nabla
    f}{L}$ on both sides gives,
  \begin{gather}
  \label{eq:forj}
  y^{(k,j)}_j - \frac{ [ \nabla f(y^{(k,j)}) ]_j}{L} \le y^{(k,j-1)}_j
  - \frac{ [ \nabla f(y^{(k,j-1)}) ]_j}{L} \ , \\
  \label{eq:forothers}
  \forall i\neq j,\ y^{(k,j)}_i - \frac{ [ \nabla f(y^{(k,j)}) ]_i}{L}
  \le y^{(k,j-1)}_i - \frac{ [ \nabla f(y^{(k,j-1)}) ]_i}{L} \ .
  \end{gather}
  Applying a scalar shrinkage on both sides of~\eqref{eq:forj} gives,
  \[
  	S_{\lambda/L}\left(y^{(k,j)}_j - \frac{ [ \nabla f(y^{(k,j)})
          ]_j}{L}\right) \le S_{\lambda/L}\left(y^{(k,j-1)}_j - \frac{
          [ \nabla f(y^{(k,j-1)}) ]_j}{L} \right)\ .
  \]
  Since the right hand side is $y^{(k,j)}_j$ by definition, we have,
  \begin{equation}
    \label{eq:superforj}
    S_{\lambda/L}\left(y^{(k,j)}_j - \frac{ [ \nabla f(y^{(k,j)})
      ]_j}{L}\right) \le y^{(k,j)}_j \ .
  \end{equation}
  For $i \neq j$, we have
  \begin{align}
  \notag
  y^{(k,j)}_i = y^{(k,j-1)}_i
  &\ge S_{\lambda/L}\left( y^{(k,j-1)}_i - \frac{[ \grad f(y^{(k,j-1)})]_i }{L} \right)\\
  \label{eq:superforothers}
  &\ge S_{\lambda/L}\left( y^{(k,j)}_i - \frac{[ \grad f(y^{(k,j)})]_i }{L} \right) \ .
  \end{align}
  The first inequality above is true because $y^{(k,j-1)}$ is a
  supersolution (by Induction Assumption) (and Lemma~\ref{lem:scale}).
  The second follows from~\eqref{eq:forothers} by applying a scalar
  shrinkage on both sides. Combining~\eqref{eq:superforj}
  and~\eqref{eq:superforothers}, we get
  \begin{align*}
    y^{(k,j)} \geq \Sbb_{\lambda/L}\left(y^{(k,j)} - \frac{\grad f(y^{(k,j)})}{L} \right)  
  \end{align*}
  which proves, using Lemma~\ref{lem:scale}, that $y^{(k,j)}$ is a supersolution.
  
\end{proof}

\subsection{Comparison: \GD\ vs. \CCD}
\label{subsec:Compare_GD_CCD}

\begin{theorem}
\label{thm:GD_CCD_Compare}
Suppose $\cbr{ x^{(k)} }$ and $\cbr{ y^{(k)} }$ are the sequences of
iterates generated by the \GD\ and \CCD\ algorithms respectively when
started from the same supersolution $x^{(0)} = y^{(0)}$. Then,
$\forall k\ge 0$,
\[
	y^{(k)} \le x^{(k)}\ .
\]
On the other hand, if they are started from the same subsolution
$x^{(0)} = y^{(0)}$ then the sequences satisfy, $\forall k \ge 0$,
\[
	y^{(k)} \ge x^{(k)}\ .
\]
\end{theorem}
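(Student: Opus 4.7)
The plan is to prove the supersolution half by induction on $k$; the subsolution case is symmetric (every inequality reverses). The base case is immediate since $y^{(0)} = x^{(0)}$. For the inductive step, assume $y^{(k)} \le x^{(k)}$ and that $y^{(k)}$ is a supersolution (the latter is provided gratis by Lemma~\ref{lem:CCD_Compare}). I would show $y^{(k+1)} \le x^{(k+1)}$ coordinate by coordinate.

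The key observation is that the $i$-th coordinate of $y^{(k+1)}$ is set at the $i$-th inner step and never touched again, so
\[
y^{(k+1)}_i = y^{(k,i)}_i = S_{\lambda/L}\!\left(y^{(k,i-1)}_i - \tfrac{[\nabla f(y^{(k,i-1)})]_i}{L}\right),
\qquad
x^{(k+1)}_i = S_{\lambda/L}\!\left(x^{(k)}_i - \tfrac{[\nabla f(x^{(k)})]_i}{L}\right).
\]
Now Lemma~\ref{lem:CCD_Compare} (applied inside the current outer iteration) gives $y^{(k,i-1)} \le y^{(k)}$, and the inductive hypothesis gives $y^{(k)} \le x^{(k)}$, so $y^{(k,i-1)} \le x^{(k)}$. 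By the isotonicity assumption, the operator $\Ib - \nabla f/L$ preserves this inequality, so in particular its $i$-th coordinate satisfies
\[
y^{(k,i-1)}_i - \tfrac{[\nabla f(y^{(k,i-1)})]_i}{L} \;\le\; x^{(k)}_i - \tfrac{[\nabla f(x^{(k)})]_i}{L}.
\]
The scalar shrinkage $S_{\lambda/L}$ is monotone nondecreasing, so applying it to both sides yields $y^{(k+1)}_i \le x^{(k+1)}_i$. Since $i$ was arbitrary, $y^{(k+1)} \le x^{(k+1)}$, closing the induction.

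The main technical subtlety is making sure the intermediate vector $y^{(k,i-1)}$ can be legitimately compared to $x^{(k)}$: the vector inequality $y^{(k,i-1)} \le x^{(k)}$ must hold on all coordinates, not just coordinate $i$, so that we are entitled to apply the isotonicity of $\Ib - \nabla f/L$ (which is a global statement) and then read off the $i$-th component of the resulting inequality. This is where the already-proved monotonicity of CCD's inner sweep, $y^{(k,i-1)} \le y^{(k)}$, is indispensable; without it, changes to coordinates $1,\ldots,i-1$ during the current outer step could in principle break the componentwise bound against $x^{(k)}$. Once that point is appreciated, the rest of the argument is a routine chaining of isotonicity with the monotonicity of the scalar shrinkage, and the subsolution case follows by reversing every inequality throughout (using the subsolution parts of Lemmas~\ref{lem:scale} and~\ref{lem:CCD_Compare}).
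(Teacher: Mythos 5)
Your proposal is correct and follows essentially the same route as the paper: induction on $k$, using the inner-sweep monotonicity $y^{(k,i-1)} \le y^{(k)}$ from Lemma~\ref{lem:CCD_Compare} together with the induction hypothesis, then isotonicity of $\Ib - \nabla f/L$ followed by the monotone scalar shrinkage. The only (immaterial) difference is that you combine $y^{(k,i-1)} \le y^{(k)} \le x^{(k)}$ by transitivity and apply the isotone operator once, whereas the paper applies $S_{\lambda/L}\circ(\Ib - \nabla f/L)$ to each of the two inequalities separately and chains the results.
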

\begin{proof}
We will prove lemma \ref{thm:GD_CCD_Compare} only for the
supersolution case by induction on $k$. The base case is trivial since
$y^{(0)} = x^{(0)}$. Now assume $y^{(k)} \le x^{(k)}$ and we will
prove $y^{(k+1)} \le x^{(k+1)}$. Fix a $j \in [d]$. Note that we have,
\[
	y^{(k+1)}_j = y^{(k,j)}_j = S_{\lambda/L}\left(y^{(k,j-1)}_j -
        \frac{[\nabla f(y^{(k,j-1)})]_j }{ L}\right)\ .
\]
By Lemma~\ref{lem:CCD_Compare}, $y^{(k,j-1)} \le y^{(k)}$. Applying
the isotone operator $S_{\lambda/L} \circ (\Ib - \nabla f/L)$ on both
sides and taking the $j$th coordinate gives,
\[
	S_{\lambda/L}\rbr{ y^{(k,j-1)}_j - \frac{ [\grad f(y^{(k,j-1)})]_j }{L} }
	\leq
	S_{\lambda/L}\rbr{ y^{(k)}_j     - \frac{ [\grad f(y^{(k)})    ]_j }{L} } \ .
\]
Combining this with the previous equation gives,
\begin{equation}
\label{eq:ybound}
	y^{(k+1)}_j \le S_{\lambda/L}\rbr{ y^{(k)}_j  - \frac{ [\grad f(y^{(k)}) ]_j }{L} } \ .
\end{equation}
Since $y^{(k)} \le x^{(k)}$ by induction hypothesis, applying the
isotone operator $S_{\lambda/L} \circ (\Ib - \nabla f/L)$ on both
sides and taking the $j$th coordinate gives,
\[
	S_{\lambda/L}\rbr{ y^{(k)}_j  - \frac{ [\grad f(y^{(k)}) ]_j }{L} }
	\le
	S_{\lambda/L}\rbr{ x^{(k)}_j  - \frac{ [\grad f(x^{(k)}) ]_j }{L} }\ .
\]
By definition,
\begin{equation}
\label{eq:vdef}
	x^{(k+1)}_j = S_{\lambda/L}\rbr{ x^{(k)}_j  - \frac{ [\grad f(x^{(k)}) ]_j }{L} } \ .
\end{equation}
Combining this with the previous inequality and ~\eqref{eq:ybound} gives,
\[
	y^{(k+1)}_j \le x^{(k+1)}_j\ .
\]
Since $j$ was arbitrary this means $y^{(k+1)} \le x^{(k+1)}$ and the
proof is complete.
\end{proof}

\subsection{Cyclic Coordinate Minimization (\CCM)}
\label{subsec:CCM}

Since \CCM\ minimizes a one-dimensional restriction of the function
$F$, let us define some notation for this subsection.  Let,
\begin{align*}
\fres{j}(\alpha;x) &:= f(x_1,\ldots,x_{j-1},\alpha,x_{j+1},\ldots,x_d) \\
\Fres{j}(\alpha;x) &:= F(x_1,\ldots,x_{j-1},\alpha,x_{j+1},\ldots,x_d) \ .
\end{align*}
With this notation, \CCM\ update can be written as:
\begin{align}
\label{eq:ccmup1}
z^{(k,j)}_j &= \argmin_{\alpha}\ \Fres{j}(\alpha; z^{(k,j-1)}) \\
\notag
\forall i\neq j,\ z^{(k,j)}_i &= z^{(k,j-1)}_i \ .
\end{align}
In order to avoid dealing with infinities in our analysis, we
want to ensure that the minimum in~\eqref{eq:ccmup1} above is attained
at a finite real number. This leads to the following assumption. 

\begin{assumption}
\label{asmp:strict}
For any $x \in \mathbb{R}^d$ and any $j \in [d]$, the one-variable
function $\fres{j}(\alpha;x)$ (and hence $\Fres{j}(\alpha;x)$) is
strictly convex.
\end{assumption}

This is a pretty mild assumption: considerably weaker than assuming,
for instance, that the function $f$ itself is strictly convex. For
example, when $f$ is quadratic as in \eqref{eq:quadratic}, then the
above assumption is equivalent to saying that the diagonal entries
$A_{j,j}$ of the positive semi definite matrix $A$ are all strictly
positive. This is much weaker than saying that $f$ is strictly convex
(which would mean $A$ is invertible).

The next lemma shows that the \CCM\ update can be represented in a way
that makes it quite similar to the \CCD\ update.

\begin{lemma}
\label{lem:CCM_update}
Fix $k \ge 0, j \in [d]$ and consider the
\CCM\ update~\eqref{eq:ccmup1}. Let $g(\alpha) = \fres{j}(\alpha;
z^{(k,j-1)})$. If the update is non-trivial, i.e. $z^{(k,j)}_j \neq
z^{(k,j-1)}_j$, it can be written as
\[
	z^{(k,j)}_j = S_{\lambda/\tau}\left( z^{(k-1,j)}_j - \frac{
          \sbr{\grad f(z^{(k,j-1)})}_j }{\tau} \right)
\]
for
\begin{equation}
\label{eq:diffratio}
	\tau = \frac{g'(z^{(k,j)}_j) - g'(z^{(k,j-1)}_j)}{ z^{(k,j)}_j
          - z^{(k,j-1))}_j }\ .
\end{equation}
Furthermore, we have $0 < \tau \le L$.
\end{lemma}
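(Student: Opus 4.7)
The plan is to start from the subgradient optimality condition characterizing the CCM update, convert it into a shrinkage identity via the equivalence (5) already stated in the preliminaries, and then algebraically rewrite the argument of the shrinkage to involve $z^{(k,j-1)}_j$ and $g'(z^{(k,j-1)}_j)$ instead of $z^{(k,j)}_j$ and $g'(z^{(k,j)}_j)$. The bulk of the work is just identifying the correct value of $\tau$ that makes the rewriting consistent and then bounding it.

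First, since $z^{(k,j)}_j$ minimizes $\alpha \mapsto \Fres{j}(\alpha; z^{(k,j-1)}) = g(\alpha) + \lambda |\alpha| + C$ (where $C$ collects terms independent of $\alpha$), subgradient optimality gives
\begin{equation*}
0 \in g'(z^{(k,j)}_j) + \lambda\, \sgn(z^{(k,j)}_j).
\end{equation*}
By the equivalence~\eqref{eq:equivalence}, this is equivalent, for \emph{any} $\tau > 0$, to the identity
\begin{equation*}
z^{(k,j)}_j = S_{\lambda/\tau}\!\left(z^{(k,j)}_j - \frac{g'(z^{(k,j)}_j)}{\tau}\right).
\end{equation*}

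Next, I will choose $\tau$ so that the argument of the shrinkage above equals $z^{(k,j-1)}_j - g'(z^{(k,j-1)}_j)/\tau$. Setting these equal and rearranging gives exactly $\tau = (g'(z^{(k,j)}_j) - g'(z^{(k,j-1)}_j))/(z^{(k,j)}_j - z^{(k,j-1)}_j)$, which is well defined by the non-triviality hypothesis $z^{(k,j)}_j \neq z^{(k,j-1)}_j$. Substituting back and recalling that $g'(z^{(k,j-1)}_j) = [\nabla f(z^{(k,j-1)})]_j$ by definition of $g$ yields the claimed representation.

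It remains to show $0 < \tau \le L$. The positivity follows from Assumption~\ref{asmp:strict}: strict convexity of $g = \fres{j}(\cdot\,; z^{(k,j-1)})$ makes $g'$ strictly increasing, so the numerator and denominator of $\tau$ have the same nonzero sign. For the upper bound, I will use that $z^{(k,j)}$ and $z^{(k,j-1)}$ differ only in the $j$th coordinate, so
\begin{equation*}
|g'(z^{(k,j)}_j) - g'(z^{(k,j-1)}_j)| = \bigl|[\nabla f(z^{(k,j)})]_j - [\nabla f(z^{(k,j-1)})]_j\bigr| \le \|\nabla f(z^{(k,j)}) - \nabla f(z^{(k,j-1)})\| \le L\, |z^{(k,j)}_j - z^{(k,j-1)}_j|,
\end{equation*}
where the last inequality is the \lcg\ property from Definition~\ref{def:lip-cont-grad}. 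Dividing gives $\tau \le L$.

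I do not anticipate a serious obstacle here; the argument is essentially a bookkeeping exercise tying together the optimality condition, the shrinkage/subgradient equivalence~\eqref{eq:equivalence}, and the Lipschitz bound on $\nabla f$. The only point requiring slight care is justifying that $\tau$ is well defined and strictly positive, which is exactly where Assumption~\ref{asmp:strict} (strict convexity along coordinate slices) enters; without it, $g'$ could be flat and the difference quotient could vanish.
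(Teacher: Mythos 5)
Your proof is correct and follows essentially the same route as the paper: both hinge on defining $\tau$ as the difference quotient of $g'$ so that $z^{(k,j)}_j - g'(z^{(k,j)}_j)/\tau = z^{(k,j-1)}_j - g'(z^{(k,j-1)}_j)/\tau$, and both get $0 < \tau \le L$ from Assumption~\ref{asmp:strict} and the Lipschitz continuity of $\nabla f$. The only difference is organizational: the paper re-derives the shrinkage representation by an explicit three-case analysis on the sign of $z^{(k,j)}_j$, whereas you obtain it in one step by invoking the already-stated equivalence~\eqref{eq:equivalence} with your particular $\tau$, which is a legitimate and slightly cleaner shortcut.
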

\begin{proof} 
  See Appendix \ref{sec:CCM_Update}
\end{proof}

We point out that this lemma is useful only for the analysis of \CCM\
and not for its implementation (as $\tau$ depends recursively on
$z^{(k,j)}_j$) except in an important special case. In the quadratic
example \eqref{eq:quadratic}, $g(\alpha)$ is a one-dimensional
quadratic function. In this case $\tau$ does not depend on
$z^{(k,j)}_j$ and is simply $A_{j,j}$. This leads to an efficient
implementation of \CCM\ for quadratic $f$.

We are now equipped with everything to prove the following behavior of
the \CCM\ iterates.

\begin{lemma}
\label{lem:CCM_Compare}
  If $z_0$ is a supersolution and $\cbr{z^{(k)}}$ is the sequence of
  iterates generated by the \CCM\ algorithm then $\forall k \geq 0$,
  \begin{align*}
  1)\quad z^{(k+1)} &\leq z^{(k)}
  &
  2)\quad z^{(k)} \text{ is a supersolution}
  \end{align*}
  If $z_0$ is a subsolution and $\cbr{z^{(k)}}$ is the sequence of
  iterates generated by the \CCD\ algorithm then $\forall k \geq 0$,
  \begin{align*}
  1)\quad z^{(k+1)} &\geq z^{(k)}
  &
  2)\quad z^{(k)} \text{ is a subsolution}
  \end{align*}
\end{lemma}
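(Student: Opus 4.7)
The plan is to mirror closely the proof of Lemma \ref{lem:CCD_Compare} for \CCD, using a double induction: an outer induction on $k$, and, within each outer step, an inner induction on $j\in\{0,1,\ldots,d\}$ that simultaneously maintains the two properties $z^{(k,j)}\le z^{(k,j-1)}\le\cdots\le z^{(k,0)}=z^{(k)}$ and ``$z^{(k,j)}$ is a supersolution''. Because $z^{(k+1)}=z^{(k,d)}$, the inner induction closes both claims of the lemma. I only discuss the supersolution case; the subsolution case is entirely symmetric. The base step $j=0$ is trivial since $z^{(k,0)}=z^{(k)}$.

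For the monotonicity half of the inner step, I would argue as follows. The vectors $z^{(k,j)}$ and $z^{(k,j-1)}$ agree on every coordinate except $j$, so it suffices to show $z^{(k,j)}_j\le z^{(k,j-1)}_j$. If the \CCM\ update is trivial this is immediate; otherwise Lemma \ref{lem:CCM_update} rewrites it as
\[
z^{(k,j)}_j \;=\; S_{\lambda/\tau}\!\left(z^{(k,j-1)}_j-\tfrac{[\nabla f(z^{(k,j-1)})]_j}{\tau}\right)
\]
for some $\tau\in(0,L]$. By the inner induction hypothesis, $z^{(k,j-1)}$ is a supersolution, so Lemma \ref{lem:scale} applied at this particular scale $\tau>0$ gives $z^{(k,j-1)}\ge\Sbb_{\lambda/\tau}(z^{(k,j-1)}-\nabla f(z^{(k,j-1)})/\tau)$ componentwise, and reading off the $j$th coordinate yields $z^{(k,j)}_j\le z^{(k,j-1)}_j$.

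For the supersolution half, I would verify the inequality $z^{(k,j)}\ge \Sbb_\lambda(z^{(k,j)}-\nabla f(z^{(k,j)}))$ coordinate by coordinate. On coordinate $j$, exact one-dimensional minimization of the strictly convex $\Fres{j}(\,\cdot\,;z^{(k,j-1)})$ (Assumption \ref{asmp:strict}) produces the subgradient optimality $0\in[\nabla f(z^{(k,j)})]_j+\lambda\sgn(z^{(k,j)}_j)$, which via the equivalence \eqref{eq:equivalence} (taking $\tau=1$) gives equality $z^{(k,j)}_j=S_\lambda(z^{(k,j)}_j-[\nabla f(z^{(k,j)})]_j)$. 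On any coordinate $i\ne j$, I have $z^{(k,j)}_i=z^{(k,j-1)}_i$; combined with $z^{(k,j)}\le z^{(k,j-1)}$ and the isotonicity assumption applied to $\Ib-\nabla f/L$, this forces $[\nabla f(z^{(k,j)})]_i\ge[\nabla f(z^{(k,j-1)})]_i$, hence $z^{(k,j)}_i-[\nabla f(z^{(k,j)})]_i\le z^{(k,j-1)}_i-[\nabla f(z^{(k,j-1)})]_i$. Applying the isotone $S_\lambda$ to both sides and then chaining with the $i$th coordinate of the supersolution inequality for $z^{(k,j-1)}$ (and using $z^{(k,j)}_i=z^{(k,j-1)}_i$) closes the inequality at coordinate $i$.

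The delicate step is the supersolution inheritance off coordinate $j$. Naive monotonicity of the scalar shrinkage is not enough, because coordinate $i$'s shrinkage argument involves $\nabla f(z^{(k,j)})$, which has moved after the $j$th update; one must route the argument through the isotonicity of $\Ib-\nabla f/L$ to control the sign of this change, and this is where the global assumption on $f$ does real work. The update at coordinate $j$ itself, by contrast, is painless since the exact minimization yields an equality, automatically upgrading the supersolution inequality there. A minor bookkeeping point is that Lemma \ref{lem:CCM_update}'s representation is valid only when the step is nontrivial, but the trivial case is handled directly and poses no difficulty.
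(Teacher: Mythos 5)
Your proof is correct and follows the same skeleton as the paper's: an outer induction on $k$ and an inner induction on $j$ that simultaneously maintains $z^{(k,j)}\le z^{(k,j-1)}$ and the supersolution property, with Lemma~\ref{lem:CCM_update} supplying the shrinkage representation of a nontrivial update and the trivial update handled separately. You do, however, take two local shortcuts that genuinely streamline the argument. First, for the monotonicity step you invoke the ``for all $\tau>0$'' direction of Lemma~\ref{lem:scale} directly at the particular $\tau$ of Lemma~\ref{lem:CCM_update}, which immediately gives $z^{(k,j)}_j\le z^{(k,j-1)}_j$; the paper instead routes through Lemma~\ref{lem:monotonic} to pass from scale $\tau$ to scale $L$ and then uses isotonicity at scale $L$ together with the supersolution property of $z^{(k)}$. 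Second, for supersolution inheritance at coordinate $j$ you use the exact subgradient optimality condition $0\in[\nabla f(z^{(k,j)})]_j+\lambda\sgn(z^{(k,j)}_j)$ with the equivalence~\eqref{eq:equivalence} at $\tau=1$, obtaining equality in the supersolution inequality at that coordinate without any case split on triviality; the paper instead derives the identity~\eqref{eq:equaldiff} from the difference-quotient definition of $\tau$ and gets the same equality at scale $\tau$, then appeals to Lemma~\ref{lem:scale}. Your treatment of the off-diagonal coordinates $i\ne j$ (using $z^{(k,j)}\le z^{(k,j-1)}$, equality at coordinate $i$, and isotonicity of $\Ib-\nabla f/L$ to control the sign of the gradient change) is the same as the paper's. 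Both routes are sound; yours uses fewer auxiliary lemmas, while the paper's scale-$\tau$ bookkeeping keeps all coordinates of the supersolution inequality at a common scale before invoking Lemma~\ref{lem:scale} once.
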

\begin{proof}
  Again, we will only prove the supersolution case as the subsolution
  case is analogous. We are given that $z^{(0)}$ is a
  supersolution. We will prove the following: if $z^{(k)}$ is a
  supersolution then,
  \begin{gather}
  \label{eq:ccmclaim1}
  z^{(k+1)} \le z^{(k)}\ ,\\
  \label{eq:ccmclaim2}
  z^{(k+1)} \text{ is a supersolution}\ .
  \end{gather}
  Then the lemma follows by induction on $k$. Let us assume that
  $z^{(k)}$ is a supersolution and try to prove~\eqref{eq:ccmclaim1}
  and~\eqref{eq:ccmclaim2}. To prove these we will show that
  $z^{(k,j)} \le z^{(k)}$ and $z^{(k,j)}$ is a supersolution by
  induction on $j \in \cbr{0,1,\hdots,d}$. This proves
  \eqref{eq:ccmclaim1} and \eqref{eq:ccmclaim2} for $z^{(k+1)}$ since
  $z^{(k+1)} = z^{(k,d)}$. .

  The base case ($j=0$) of the induction is trivial since $z^{(k,0)}
  \le z^{(k)}$ since the two vectors are equal. For the same reason,
  $z^{(k,0)}$ is a supersolution since we have assumed $z^{(k)}$ to be
  a supersolution.  Now assume $z^{(k,j-1)} \le z^{(k)}$ and
  $z^{(k,j-1)}$ is a supersolution for some $j > 0$. We want to show
  that $z^{(k,j)} \le z^{(k)}$ and $z^{(k,j)}$ is a supersolution. If
  the update to $z^{(k,j)}$ was trivial, i.e. $z^{(k,j-1)} =
  z^{(k,j)}$ then there is nothing to prove. Therefore, for the
  remainder of the proof assume that the update is non-trivial (and
  hence Lemma~\ref{lem:CCM_update} applies).

  Since $z^{(k,j-1)}$ and $z^{(k,j)}$ differ only in the $j$th
  coordinate, to show that $z^{(k,j)} \le z^{(k)}$ given that
  $z^{(k,j-1)} \le z^{(k)}$, it suffices to show that $z^{(k,j)} \le z^{(k,j-1)}$, i.e.
  \begin{equation}
  \label{eq:ccmjthentry}
	z^{(k,j)}_j \le z^{(k,j-1)}_j = z^{(k)}_j \ .
  \end{equation}
  As in Lemma~\eqref{lem:CCM_update}, let us denote $\fres{j}(\alpha;
  z^{(k,j-1)}$ by $g(\alpha)$. The lemma gives us a $\tau \in
  (0,L]$ such that,
  \begin{equation}
  \label{eq:fromrep}
	z^{(k,j)}_j = S_{\lambda/\tau} \left( z^{(k,j-1)}_j - \frac{[
            \grad f(z^{(k,j-1)}) ]_j}{\tau} \right)\ .
  \end{equation}
  Since $z^{(k,j-1)}$ is a supersolution by induction hypothesis and
  $\tau \leq L$, using Lemma~\ref{lem:monotonic} we get
  \begin{align*}
  z^{(k,j)}_j &\le S_{\lambda/L} \left( z^{(k,j-1)}_j - \frac{[ \grad
      f(z^{(k,j-1)}) ]_j}{L} \right) \le S_{\lambda/L} \left(
  z^{(k)}_j - \frac{[ \grad f(z^{(k)}) ]_j}{L} \right) \le
  z^{(k)}_j \ .
  \end{align*}
  where the second inequality above holds because $z^{(k,j-1)} \le
  z^{(k)}$ by induction hypothesis and since $\Sbb_{\lambda/L} \circ
  (\Ib - \grad f/L)$ is an isotone operator. The third holds since
  $z^{(k)}$ is a supersolution (coupled with
  Lemma~\ref{lem:scale}). Thus, we have proved~\eqref{eq:ccmjthentry}.

  We now need to prove that $z^{(k,j)}$ is a
  supersolution. To this end, we first claim that
  \begin{equation}
  \label{eq:equaldiff}
	z^{(k,j-1)}_j - \frac{[\grad f(z^{(k,j-1)})]_j}{\tau} =
        z^{(k,j)}_j - \frac{[\grad f(z^{(k,j)})]_j}{\tau} \ .
  \end{equation}
  This is true since
  \begin{align*}
  &\quad z^{(k,j-1)}_j - \frac{[\grad f(z^{(k,j-1)})]_j}{\tau} -
    z^{(k,j)}_j + \frac{[\grad f(z^{(k,j)})]_j}{\tau} \\ &=
    z^{(k,j-1)}_j - z^{(k,j)}_j - \frac{1}{\tau}( g'(z^{(k,j-1)}_j) -
    g'(z^{(k,j)}_j) ) \\ &= z^{(k,j-1)}_j - z^{(k,j)}_j - (
    z^{(k,j-1)}_j - z^{(k,j)}_j ) = 0\ .
  \end{align*}
  The first equality is true by definition of $g$ and the second
  by~\eqref{eq:diffratio}. Now, applying $S_{\lambda/\tau}$ to both
  sides of~\eqref{eq:equaldiff} and using~\eqref{eq:fromrep}, we get
  \begin{align}
  \notag z^{(k,j)}_j &= S_{\lambda/\tau} \left( z^{(k,j-1)}_j -
  \frac{[ \grad f(z^{(k,j-1)}) ]_j}{\tau} \right) \\
  \label{eq:ccmsuperforj}
  &= S_{\lambda/\tau} \left( z^{(k,j)}_j - \frac{[ \grad f(z^{(k,j)})
    ]_j}{\tau} \right) \ .
  \end{align}
  For $i \neq j$, $z^{(k,j)}_i = z^{(k,j-1)}_i$ and thus we have
 \begin{align*}
    &\quad z^{(k,j-1)}_i - \frac{ [\grad f(z^{(k,j-1)})]_i }{\tau} -
    z^{(k,j)}_i + \frac{ [\grad f(z^{(k,j)})]_j }{\tau} \\ 
    &= -\frac{1}{\tau} \sbr{ [\grad f(z^{(k,j-1)})]_i - [\grad
        f(z^{(k,j)})]_i } \geq 0
  \end{align*}
  The last inequality holds because we have already shown that
  $z^{(k,j-1)} \ge z^{(k,j)}$ and thus by isotonicity of $\Ib - \grad
  f/L$, we have
  \[
	[\grad f(z^{(k,j-1)})]_i - [\grad f(z^{(k,j)})]_i \le
        L(z^{(k,j-1)}_i - z^{(k,j)}_i) = 0\ .
  \]
  Using the monotonic scalar shrinkage operator we have 
  \begin{align*}
    S_{\lambda/\tau}\rbr{z^{(k,j-1)}_i - \frac{ [\grad
          f(z^{(k,j-1)})]_i }{\tau} } \geq
    S_{\lambda/\tau}\rbr{z^{(k,j)}_i - \frac{ [\grad f(z^{(k,j)})]_i
      }{\tau} }
  \end{align*}
  which, using the inductive hypothesis that $z^{(k,j-1)}$ is a
  supersolution, further yields
  \begin{align}
    \label{eq:ccmsuperforothers}
     z^{(k,j)}_i = z^{(k,j-1)}_i 
    \geq S_{\lambda/\tau}\rbr{z^{(k,j-1)}_i - \frac{ [\grad
          f(z^{(k,j-1)})]_i }{\tau} } 
    &\geq S_{\lambda/\tau}\rbr{z^{(k,j)}_i - \frac{ [\grad
          f(z^{(k,j)})]_i }{\tau} } \ .
  \end{align}
  Combining~\eqref{eq:ccmsuperforj} and~\eqref{eq:ccmsuperforothers},
  we get
  \begin{align*}
    z^{(k,j)} \geq \Sbb_{\lambda/\tau}\left(z^{(k,j)} - \frac{\grad
      f(z^{(k,j)})}{\tau} \right)
  \end{align*}
  which proves, using Lemma~\ref{lem:scale}, that $z^{(k,j)}$ is a
  supersolution.
\end{proof}

\subsection{Comparison: \CCD\ vs. \CCM}
\label{subsec:Compare_CCD_CCM}

\begin{theorem}
\label{thm:CCD_CCM_Compare}
Suppose $\cbr{ y^{(k)} }$ and $\cbr{ z^{(k)} }$ are the sequences of
iterates generated by the \CCD\ and \CCM\ algorithms respectively when
started from the same supersolution $y^{(0)} = z^{(0)}$. Then,
$\forall k\ge 0$,
\[
	z^{(k)} \le y^{(k)}\ .
\]
On the other hand, if they are started from the same subsolution
$y^{(0)} = z^{(0)}$ then the sequences satisfy, $\forall k \ge 0$,
\[
	z^{(k)} \ge y^{(k)}\ .
\]
\end{theorem}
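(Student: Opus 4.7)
The plan is to mirror the inductive structure of the proof of Theorem~\ref{thm:GD_CCD_Compare}, but now to carry the comparison all the way down to the $d$ inner iterates that each outer step of \CCD\ and \CCM\ produces. I only treat the supersolution case explicitly; the subsolution case follows by an analogous argument with all inequalities reversed and with Lemma~\ref{lem:monotonic} invoked in its nonincreasing form. The outer induction is on $k$, with the base case $k = 0$ holding by hypothesis. For the inductive step I would assume $z^{(k)} \le y^{(k)}$ and prove $z^{(k,j)} \le y^{(k,j)}$ for all $j \in \{0,1,\ldots,d\}$ by a nested induction on $j$; setting $j = d$ gives $z^{(k+1)} \le y^{(k+1)}$.

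For the nested induction, the base $j = 0$ is immediate since $z^{(k,0)} = z^{(k)}$ and $y^{(k,0)} = y^{(k)}$. For the step from $j-1$ to $j$, the coordinates $i \neq j$ are untouched in both updates, so the nested hypothesis $z^{(k,j-1)} \le y^{(k,j-1)}$ gives $z^{(k,j)}_i \le y^{(k,j)}_i$ for all $i \neq j$ at no cost. For the $j$th coordinate, I would first dispose of the trivial case $z^{(k,j)}_j = z^{(k,j-1)}_j$ by invoking the optimality condition~\eqref{eq:optimality_cond} together with~\eqref{eq:equivalence} to rewrite $z^{(k,j)}_j$ as $S_{\lambda/L}\bigl(z^{(k,j-1)}_j - [\grad f(z^{(k,j-1)})]_j/L\bigr)$. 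In the non-trivial case, Lemma~\ref{lem:CCM_update} supplies a $\tau \in (0,L]$ such that
\[
    z^{(k,j)}_j = S_{\lambda/\tau}\!\left( z^{(k,j-1)}_j - \frac{[\grad f(z^{(k,j-1)})]_j}{\tau} \right),
\]
and since Lemma~\ref{lem:CCM_Compare} (applied at the inner-iterate level as it is already proved there) guarantees that $z^{(k,j-1)}$ is a supersolution, Lemma~\ref{lem:monotonic} yields
\[
    z^{(k,j)}_j \le S_{\lambda/L}\!\left( z^{(k,j-1)}_j - \frac{[\grad f(z^{(k,j-1)})]_j}{L} \right).
\]
In either case, the nested hypothesis $z^{(k,j-1)} \le y^{(k,j-1)}$ combined with the isotonicity of $\Sbb_{\lambda/L} \circ (\Ib - \grad f/L)$ bounds the right-hand side by $S_{\lambda/L}\bigl(y^{(k,j-1)}_j - [\grad f(y^{(k,j-1)})]_j/L\bigr) = y^{(k,j)}_j$, closing the induction.

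The main obstacle is the mismatch between the two shrinkage operators: \CCM\ applies $S_{\lambda/\tau}$ with a data-dependent $\tau$ that can be strictly smaller than $L$, corresponding to a \emph{less aggressive} shrinkage than the $S_{\lambda/L}$ used by \CCD. A priori this could push $z^{(k,j)}_j$ \emph{above} $y^{(k,j)}_j$ and defeat the comparison. The resolution is precisely Lemma~\ref{lem:monotonic}, which converts the scalar inequality $\tau \le L$ into the desired coordinate inequality, but only under the supersolution assumption on the current inner iterate. The crucial observation is therefore that the supersolution property is preserved not just across outer iterations but across every inner step, which is exactly the content of the auxiliary claim in the proof of Lemma~\ref{lem:CCM_Compare}. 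Once this is in hand, the remainder of the argument is a routine chaining of isotone operators identical in spirit to Theorem~\ref{thm:GD_CCD_Compare}.
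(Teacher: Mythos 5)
Your proposal is correct and follows essentially the same route as the paper's proof: an outer induction on $k$ with a nested induction on the inner index $j$, splitting the $j$th coordinate into the trivial case (handled via the optimality condition~\eqref{eq:optimality_cond} and~\eqref{eq:equivalence}) and the non-trivial case (handled via Lemma~\ref{lem:CCM_update}, the supersolution property of the inner iterates from Lemma~\ref{lem:CCM_Compare}, and Lemma~\ref{lem:monotonic} to pass from $S_{\lambda/\tau}$ to $S_{\lambda/L}$), before chaining with the isotone operator $\Sbb_{\lambda/L} \circ (\Ib - \grad f/L)$ applied to $z^{(k,j-1)} \le y^{(k,j-1)}$. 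You also correctly identify the one point the theorem statement alone does not make explicit, namely that the supersolution property must be carried through every inner step, which is exactly what the proof of Lemma~\ref{lem:CCM_Compare} supplies.
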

\begin{proof}
  We will only prove the supersolution case as the subsolution case is
  analogous. Given that $y^{(0)} = z^{(0)}$ is a supersolution, we
  will prove the following: if $z^{(k)} \le y^{(k)}$ then,
  \begin{equation}
  \label{eq:ccdccmclaim}
  z^{(k+1)} \le y^{(k+1)} \ .
  \end{equation}
  Then the lemma follows by induction on $k$. Let us assume $z^{(k)}
  \le y^{(k)}$ and try to prove~\eqref{eq:ccdccmclaim}.  To this end
  we will show that $z^{(k,j)} \le y^{(k,j)}$ by induction on $j \in
  \cbr{0,1,\hdots,d}$. This infers \eqref{eq:ccdccmclaim} since
  $z^{(k+1)} = z^{(k,d)}$ and $y^{(k+1)} = y^{(k,d)}$.

  The base case ($j=0$) is true by the given condition in the lemma
  since $z^{(k,0)} = z^{(k)}$ as well as $y^{(k,0)} = y^{(k)}$. Now,
  assume $z^{(k,j-1)} \le y^{(k,j-1)}$ for some $j > 0$. We want to
  show that $z^{(k,j)} \le y^{(k,j)}$.

  Since $z^{(k,j-1)}, z^{(k,j)}$ and $y^{(k,j-1)}, y^{(k,j)}$ differ
  only in the $j$th coordinate, to show that $z^{(k,j)} \le y^{(k,j)}$
  given that $z^{(k,j-1)} \le y^{(k,j-1)}$, it suffices to show that
  \begin{equation}
  \label{eq:ccdccmjthentry}
  z^{(k,j)}_j \le y^{(k,j)}_j\ .
  \end{equation}
  If the update to $z^{(k,j)}$ is non-trivial then using
  Lemma~\ref{lem:CCM_update}, there is a $\tau \in (0,L]$, such that
  \begin{align}
  \notag 
  z^{(k,j)}_j &= S_{\lambda/\tau}\rbr{z^{(k,j-1)}_j - \frac{
      [\grad f(z^{(k,j-1)})]_j }{\tau} } \\
  \label{eq:zkjupper}
  &\leq S_{\lambda/L}\rbr{z^{(k,j-1)}_j - \frac{ [\grad
        f(z^{(k,j-1)})]_j }{L} } \ ,
  \end{align}
  where the last inequality holds because of Lemma~\ref{lem:monotonic}
  and the fact that $z^{(k,j-1)}$ is a supersolution
  (Lemma~\ref{lem:CCM_Compare}).  If the update is trivial,  
  i.e. $z^{(k,j)}_j = z^{(k,j-1)}_j$ then 
  using \eqref{eq:ccmup1} and \eqref{eq:optimality_cond} we have 
  \[
  0 \in [\grad f(z^{(k,j)})]_j + \lambda\sgn(z^{(k,j)}_j)\ .
  \]
  which coupled with \eqref{eq:equivalence} gives 
  \[ 
  z^{(k,j)}_j = S_{\lambda/L}\rbr{z^{(k,j)}_j - \frac{[\grad
        f(z^{(k,j)})]_j}{L}} \leq S_{\lambda/L}\rbr{z^{(k,j-1)}_j -
    \frac{[\grad f(z^{(k,j-1)})]_j}{L}}
  \]
  where the last inequality is obtained by applying the isotone
  operator $\Sbb_{\lambda/L} \circ (\Ib - \grad f/L)$ to the
  inequality $z^{(k,j)}\leq z^{(k,j-1)}$ which holds by lemma
  \ref{lem:CCM_Compare}.  Thus \eqref{eq:zkjupper} holds irrespective
  of the triviality of the update. 

  Now applying the same isotone operator
  to the inequality
  $z^{(k,j-1)} \le y^{(k,j-1)}$ and taking the $j$th coordinate gives,
  \[
  	S_{\lambda/L}\rbr{z^{(k,j-1)}_j - \frac{ [\grad
              f(z^{(k,j-1)})]_j }{L} } \le
        S_{\lambda/L}\rbr{y^{(k,j-1)}_j - \frac{ [\grad
              f(y^{(k,j-1)})]_j}{L} }\ .
  \]
  The right hand side above is, by definition, $y^{(k,j)}_j$. So,
  combining the above with~\eqref{eq:zkjupper}
  gives~\eqref{eq:ccdccmjthentry} and proves our inductive claim.
\end{proof}

\section{Convergence Rates}
\label{sec:Rates}

Our results so far have given inequalities comparing the iterates
generated by the three algorithms. We finally want to compare the
function values obtained by these iterates. For doing that, the next
lemma is useful.

\begin{lemma}
  \label{lem:func_compare}
  If $y$ is a supersolution and $y \le x$ then $F(y) \le F(x)$.
\end{lemma}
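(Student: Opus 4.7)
The plan is to use convexity of $F$ together with a careful choice of subgradient. Since $F = f + \lambda\|\cdot\|_1$ is convex, for any $g \in \partial F(y)$ we have $F(x) \ge F(y) + \langle g, x - y\rangle$. We are given $x - y \ge 0$ componentwise, so it suffices to exhibit a particular subgradient $g \in \partial F(y)$ that is nonnegative componentwise; the inner product will then be $\ge 0$ and the lemma follows.

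First I would recall that $\partial F(y) = \{\nabla f(y) + \lambda s : s_i \in \sgn(y_i)\ \forall i\}$, where $\sgn$ is the interval-valued sign function. The core step is therefore to show that the supersolution hypothesis $y \ge \Sbb_{\lambda/L}(y - \nabla f(y)/L)$ guarantees, for each coordinate $i$, the existence of $s_i \in \sgn(y_i)$ with
\[
[\nabla f(y)]_i + \lambda s_i \ge 0.
\]
This is a purely one-dimensional statement and I would prove it by a case split on the sign of $y_i$. Writing $a = y_i$ and $b = [\nabla f(y)]_i$ and $u = a - b/L$, the supersolution condition reads $a \ge S_{\lambda/L}(u)$.

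The three cases are: (i) If $a > 0$, then $\sgn(a) = \{1\}$ and one needs $b + \lambda \ge 0$. In each of the three branches of the definition of $S_{\lambda/L}(u)$ this follows directly from $a \ge S_{\lambda/L}(u)$ (e.g.\ when $u > \lambda/L$ it becomes $a \ge a - b/L - \lambda/L$, giving $b + \lambda \ge 0$; the other branches are even easier using $a > 0$). (ii) If $a < 0$, then $\sgn(a) = \{-1\}$ and one needs $b - \lambda \ge 0$; the first two branches of $S_{\lambda/L}$ turn out to be incompatible with $a \ge S_{\lambda/L}(u)$ and $a < 0$, so only the branch $u < -\lambda/L$ survives, and it yields $b \ge \lambda$. (iii) If $a = 0$, then $\sgn(a) = [-1,1]$ and one needs $|b| \le \lambda$, which is exactly what $0 \ge S_{\lambda/L}(-b/L)$ forces. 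In every case, the equivalence \eqref{eq:equivalence} together with the pointwise supersolution inequality supplies the desired $s_i$.

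The only mildly delicate point is the case analysis in step~2; once that is done the argument closes in one line, since $g := \nabla f(y) + \lambda s \in \partial F(y)$ satisfies $g \ge 0$ and $x - y \ge 0$, so
\[
F(x) \ge F(y) + \langle g, x - y \rangle \ge F(y). \qedhere
\]
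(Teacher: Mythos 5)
Your proposal is correct and follows essentially the same route as the paper's proof: convexity of $F$ plus the observation that the supersolution inequality $y \ge \Sbb_{\lambda/L}(y - \nabla f(y)/L)$ yields, coordinate by coordinate, some $s_i \in \sgn(y_i)$ with $[\nabla f(y)]_i + \lambda s_i \ge 0$, so that the chosen subgradient pairs nonnegatively with $x - y \ge 0$ (the paper merely organizes the case split by the branches of the shrinkage operator rather than by the sign of $y_i$). One small inaccuracy in your case (iii): when $y_i = 0$ the supersolution condition does \emph{not} force $\abr{[\nabla f(y)]_i} \le \lambda$ (it is consistent with $[\nabla f(y)]_i > \lambda$); it only forces $[\nabla f(y)]_i \ge -\lambda$, but that weaker fact is all you need, since taking $s_i = 1 \in \sgn(0)$ gives $[\nabla f(y)]_i + \lambda s_i \ge 0$.
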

\begin{proof}
  Since $F$ is convex, we have 
  \begin{align}
    \label{eq:func_value_inequality}
    F(y) - F(x) &\leq \inner{\grad f(y) + \lambda \rho}{y - x}
  \end{align}
  for any $\rho \in \partial\|y\|_1$.
  We have assumed that $y \leq x$. Thus in order to prove 
  $F(y) - F(x) \le 0$, it suffices to show that 
  \begin{align}
    \label{eq:func_values_condition}
    \forall i\in[d], \qquad \exists \rho_i \in \sgn(y_i) \qquad
    \text{s.t.} \qquad
    \gamma_i + \lambda \rho_i \ge 0
  \end{align}
  where, for convenience, we denote the gradient $\grad f(y)$ by $\gamma$.
  Since $y$ is a supersolution, Lemma~\ref{lem:scale} gives, 
  \begin{align}
    \label{eq:Super_sol_relation}
    \forall i\in[d], \qquad y_i \geq S_{\lambda/L}\left(y_i -
    \frac{\gamma_i}{L}\right)
  \end{align}
  For any $i \in [d]$, there are three mutually exclusive and
  exhaustive cases.
  \begin{description}
    \item[Case (1)]:  $y_i > \frac{\gamma_i + \lambda}{L}$ 
      Plugging this value in \eqref{eq:Super_sol_relation} and using
      the definition of scalar shrinkage~\eqref{eq:scshrinkage}, we
      get
      \begin{align*}
        y_i \geq y_i - \frac{\gamma_i + \lambda}{L}
      \end{align*}
      which gives $\gamma_i + \lambda \geq 0$ and hence $y_i>0$. Thus,
      we can choose $\rho_i = 1 \in \sgn(y_i)$ and we indeed have $\gamma_i
      + \lambda \rho_i \ge 0$.
    \item[Case (2)]: $y_i \in
      [\frac{\gamma_i-\lambda}{L},\frac{\gamma_i + \lambda}{L}]$
      In this case, we have $y_i \geq S_{\lambda/L}(y^{(k)}_i -
      \frac{\gamma_i}{L}) = 0$. Thus,
      \begin{align*}
        \frac{\gamma_i+\lambda}{L} \geq y_i \geq 0\ .
      \end{align*}
      Thus we can choose $\rho_i = 1 \in \sgn(y_i)$ and we have
      $\gamma_i + \lambda \rho_i \ge 0$.
      
    \item[Case (3)]: $y_i < \frac{\gamma_i - \lambda}{L}$ 
      Plugging this
      value in \eqref{eq:Super_sol_relation} and using the
      definition of scalar shrinkage~\eqref{eq:scshrinkage}, we get
      \begin{align*}
        y_i \geq y_i - \frac{\gamma_i - \lambda}{L}
      \end{align*}
        which gives $\gamma_i - \lambda\geq 0$. 
        Now if $y_i \leq 0$, we can set $\rho = -1 \in \sgn(y_i)$ and
        will have $\gamma_i + \lambda \rho_i \geq 0$.  On the other
        hand, if $y_i > 0$, we need to choose $\rho_i = 1$ and thus
        $\gamma_i + \lambda\geq 0$ should hold if
        \eqref{eq:func_values_condition} is to be true. However, we
        know $\gamma_i - \lambda\geq 0$, and $\lambda \geq 0$ so
        $\gamma_i + \lambda\geq 0$ is also true.
  \end{description}
  Thus in all three cases we have that there is a $\rho_i \in
  \sgn(y_i)$ such that~\eqref{eq:func_values_condition} is true.
\end{proof}

There is a similar lemma for subsolutions whose proof, being
similar to the proof above, is skipped.
\begin{lemma}
  If $y$ is a subsolution and $y \ge x$ then $F(y) \le F(x)$.
\end{lemma}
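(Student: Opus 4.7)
The plan is to mirror the argument used for the supersolution case. By convexity of $F$, for any subgradient vector $\rho \in \partial \|y\|_1$ we have
\[
F(y) - F(x) \le \inner{\grad f(y) + \lambda \rho}{y - x}.
\]
Since we are given $y \ge x$, i.e.\ $y - x \ge 0$ componentwise, it suffices to exhibit, for each coordinate $i$, a value $\rho_i \in \sgn(y_i)$ such that
\[
\gamma_i + \lambda \rho_i \le 0, \qquad \text{where } \gamma := \grad f(y).
\]
This is the exact sign-reversed analogue of the condition \eqref{eq:func_values_condition} used in the supersolution proof.

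To produce such $\rho_i$, I would invoke Lemma~\ref{lem:scale} to rewrite the subsolution hypothesis as $y_i \le S_{\lambda/L}\!\left(y_i - \gamma_i/L\right)$ for all $i$, and then split into the same three mutually exclusive cases based on where $y_i$ sits relative to the shrinkage thresholds $(\gamma_i \pm \lambda)/L$. In Case~(1), $y_i > (\gamma_i+\lambda)/L$, the shrinkage unrolls to $y_i - (\gamma_i+\lambda)/L$ and the subsolution inequality forces $\gamma_i + \lambda \le 0$, so $\rho_i = 1$ works if $y_i > 0$ and $\rho_i = -1$ works if $y_i \le 0$ (using $\lambda \ge 0$). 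In Case~(2), $y_i \in [(\gamma_i-\lambda)/L, (\gamma_i+\lambda)/L]$, the shrinkage gives $0$, forcing $y_i \le 0$ and $\gamma_i - \lambda \le 0$, so $\rho_i = -1$ is always available in $\sgn(y_i)$. In Case~(3), $y_i < (\gamma_i - \lambda)/L$, an analogous unrolling yields $\gamma_i - \lambda \le 0$ and $y_i < 0$, so again $\rho_i = -1$ succeeds.

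Having built a valid $\rho \in \partial \|y\|_1$ satisfying $\gamma_i + \lambda \rho_i \le 0$ for every $i$, the inner product $\inner{\gamma + \lambda \rho}{y-x}$ is a sum of products of nonpositive and nonnegative terms, hence nonpositive, and the convexity inequality yields $F(y) \le F(x)$.

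There is no real technical obstacle here; the proof is a direct sign-flipped copy of the supersolution argument. The one place requiring care is the bookkeeping of signs in Case~(1), where the conclusion $\gamma_i + \lambda \le 0$ does not immediately determine the sign of $y_i$, so one must separately verify that both $\rho_i = 1$ and $\rho_i = -1$ (depending on $\sgn(y_i)$) satisfy the required inequality — this works because $\lambda \ge 0$ implies $\gamma_i - \lambda \le \gamma_i + \lambda \le 0$.
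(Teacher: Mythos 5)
Your proof is correct and is exactly the sign-reversed analogue of the paper's argument for Lemma~\ref{lem:func_compare}, which is precisely what the paper intends when it skips this proof as ``similar.'' The case analysis is sound, including the only delicate point (Case~(1), where $\gamma_i+\lambda\le 0$ together with $\lambda\ge 0$ covers both possible signs of $y_i$).
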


If we start from a supersolution, the iterates for \CCD\ and
\CCM\ always maintain the supersolution property. Thus Lemma
\ref{lem:func_compare} ensures that starting from the same initial
iterate, the function values of the \CCD\ and \CCM\ iterates always
remain less than the corresponding \GD\ iterates. Since the
\GD\ algorithm has $O(1/k)$ accuracy guarantees according to Theorem
\ref{thm:gd}, the same rates must hold true for \CCD\ and \CCM. This
is formalized in the following theorem.
\begin{theorem}   
  \label{thm:main_theorem}
  Starting from the same super- or subsolution $x^{(0)} = y^{(0)} = z^{(0)}$,
  let $\cbr{x^{(k)}}$, $\cbr{y^{(k)}}$ and $\cbr{z^{(k)}}$ denote the
  \GD, \CCD\ and \CCM\ iterates respectively. Then for any minimizer
  $x^*$ of \eqref{eq:Reg_l_1_loss}, and $\forall k\geq 1$, 
  \[
  F(z^{(k)}) \le F(y^{(k)}) \le F(x^{(k)})
  \le F(x^\star) + \frac{ L \| x^\star - x^{(0)} \|^2}{2\,k}
  \] 
\end{theorem}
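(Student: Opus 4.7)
The plan is purely a matter of chaining together results already established in Section~\ref{sec:Analysis} together with the known GD rate and Lemma~\ref{lem:func_compare}. The rightmost inequality $F(x^{(k)}) \le F(x^\star) + L\|x^\star - x^{(0)}\|^2/(2k)$ is nothing but Theorem~\ref{thm:gd} applied to the GD sequence. So the real content to verify is the sandwich $F(z^{(k)}) \le F(y^{(k)}) \le F(x^{(k)})$.

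I would first handle the supersolution case. Two ingredients line up nicely. Lemma~\ref{lem:CCD_Compare} guarantees that every CCD iterate $y^{(k)}$ is itself a supersolution, and Theorem~\ref{thm:GD_CCD_Compare} gives the coordinatewise inequality $y^{(k)} \le x^{(k)}$. Feeding these two facts into Lemma~\ref{lem:func_compare} (with $y = y^{(k)}$ and $x = x^{(k)}$) immediately yields $F(y^{(k)}) \le F(x^{(k)})$. The same template then produces the remaining inequality: Lemma~\ref{lem:CCM_Compare} says $z^{(k)}$ is a supersolution, Theorem~\ref{thm:CCD_CCM_Compare} says $z^{(k)} \le y^{(k)}$, and a second application of Lemma~\ref{lem:func_compare} (with $y = z^{(k)}$ and $x = y^{(k)}$) delivers $F(z^{(k)}) \le F(y^{(k)})$.

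The subsolution case is handled in exactly the same manner, simply flipping the direction of every coordinatewise inequality and invoking the subsolution halves of the same results. Specifically, one uses the subsolution parts of Lemmas~\ref{lem:CCD_Compare} and~\ref{lem:CCM_Compare} to keep the iterates inside the subsolution class, the subsolution parts of Theorems~\ref{thm:GD_CCD_Compare} and~\ref{thm:CCD_CCM_Compare} to get $z^{(k)} \ge y^{(k)} \ge x^{(k)}$, and the subsolution version of Lemma~\ref{lem:func_compare} (stated immediately after it) to translate these coordinatewise inequalities into the same ordering of function values.

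There is no real obstacle in this last step, since all the heavy lifting has been done. The only subtlety worth flagging in the write-up is that Lemma~\ref{lem:func_compare} is not symmetric in its two arguments: it requires the \emph{smaller} vector to be a supersolution (or, in the dual version, the \emph{larger} vector to be a subsolution). This is precisely why the preservation statements of Lemmas~\ref{lem:CCD_Compare} and~\ref{lem:CCM_Compare} are essential — without the inductively maintained super-/subsolution property of $y^{(k)}$ and $z^{(k)}$, the coordinatewise inequalities alone would not translate into inequalities on $F$.
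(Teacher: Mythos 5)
Your proposal is correct and follows exactly the route the paper itself takes: the preceding discussion in Section~\ref{sec:Rates} assembles the same ingredients (supersolution/subsolution preservation from Lemmas~\ref{lem:CCD_Compare} and~\ref{lem:CCM_Compare}, the coordinatewise comparisons from Theorems~\ref{thm:GD_CCD_Compare} and~\ref{thm:CCD_CCM_Compare}, Lemma~\ref{lem:func_compare} and its subsolution counterpart, and Theorem~\ref{thm:gd} for the final rate). Your remark about the asymmetry of Lemma~\ref{lem:func_compare} --- that the smaller vector must be the supersolution (dually, the larger one the subsolution) --- is a correct and worthwhile observation about why the preservation lemmas are indispensable.
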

\section{Conclusion}
\label{sec:Conclusion}
Coordinate descent based methods have seen a resurgence of popularity in
recent times in both the machine learning and the statistics
community, due to the simplicity of the updates and implementation of
the overall algorithms. Absence of finite time
convergence rates is thus one of the most important theoretical issues to 
address.  

In this paper, we provided a comparative analysis of \GD, \CCD\ and
\CCM\ algorithms to give the first known finite time guarantees on the
convergence rates of cyclic coordinate descent methods. However, there
still are a significant number of unresolved questions. Our
comparative results require that the algorithms start from a
supersolution so that the property is maintained for all the
subsequent iterates. We also require 
an isotonicity assumption on the $\Ib - \grad f/L$ operator. Although this is a 
fairly common assumption in numerical optimization \citep{BertTsit89}, it is
desirable to have a more generalized analysis without any restrictions.  
Since stochastic coordinate descent \citep{ShaiAmbuj09}
converges at the same $O(1/k)$ rate as \GD\ without additional assumptions,
intuition suggests that same should be true for \CCD\ and \CCM.
A theoretical proof of the same remains an open question. 

Some greedy versions of the coordinate descent algorithm (e.g., \citep{WuLange08}) still lack
a theoretical analysis of their finite time convergence guarantees. Although \cite{Clarkson08} 
has a $O(1/k)$ rates for a greedy version, the analysis is restricted 
to a simplex domain and does not generalize to arbitrary domains. The phenomenal performance 
of greedy coordinate descent algorithms on real life datasets 
makes it all the more essential to validate these experimental results theoretically.

\bibliographystyle{alpha} 
\bibliography{Paper}

\newpage
\appendix
\section*{Appendix}
\label{sec:Appendix}

\section{Proof of Lemma \ref{lem:CCM_update}}
\label{sec:CCM_Update}
Since $g(\alpha) = \fres{i}(\alpha; z^{(k,j-1)})$ we have
  \begin{align*}
    g'(\alpha) = \left[\grad f(z^{(k,j-1)}_1,z^{(k,j-1)}_2,\hdots
      z^{(k,j-1)}_{j-1},\alpha, z^{(k,j-1)}_{j+1},\hdots
      z^{(k,j-1)}_d)\right]_j
  \end{align*}
  Therefore, 
  \begin{align}
    \label{eq:equiv_f_g}
    g'(z^{(k,j-1)}_j) = [\grad f(z^{(k,j-1)})]_j 
  \end{align}
  Since, by definition, $z^{(k,j)}_j$ is the minimizer of $g(\alpha) +
  \lambda|\alpha|$, we have
  \begin{align*}
    0 \in g'(z^{(k,j)}_j) + \lambda \sgn(z^{(k,j)}_j)
  \end{align*}
  For notational convenience we denote $z^{(k,j)}_j$ as
  $\alpha^\star$, since it is the minimizer of $g(\alpha) +
  \lambda|\alpha|$.  With this notation we have,
  \begin{equation}
  \label{eq:betadef}
  	\tau = \frac{ g'(\alpha^\star) - g'(z^{(k,j-1)}_j)
        }{\alpha^\star - z^{(k,j-1)}_j}\ .
  \end{equation}
  Note that $\tau$ is well defined since the denominator is non-zero
  by our assumption of a non-trivial update. Further, $\tau > 0$ by
  Assumption~\ref{asmp:strict} and $\tau \leq L$ since $\grad f$ (and
  hence $g'(\alpha)$) is $L$-Lipschitz continuous.

  Depending on the sign of $\alpha^\star$, there are three possible
  cases:
  \begin{description}
    \item[Case (1): $\alpha^\star >0$:] This implies that
      \begin{align}
        \label{eq:case1_alpha}
        g'(\alpha^\star)+\lambda = 0
      \end{align}
      By~\eqref{eq:betadef}, 
      \begin{align*}
        g'(\alpha^\star) = g'(z^{(k,j-1)}_j) + \tau (\alpha^\star -
        z^{(k,j-1)}_j)
      \end{align*}
      Plugging this in \eqref{eq:case1_alpha}, we get 
      \begin{align*}
        g'(z^{(k,j-1)}_j) + \tau(\alpha^\star - z^{(k,j-1)}_j) + \lambda = 0 \ .
      \end{align*}
      Using the definition of shrinkage
      operator~\eqref{eq:scshrinkage} combined with the fact that
      $\alpha^\star>0$, we have
      \begin{align*}
        \alpha^\star &= z^{(k,j-1)}_j -
        \frac{1}{\tau}g'(z^{(k,j-1)}_j) - \frac{\lambda}{\tau} \\ &=
        S_{\lambda/\tau}\rbr{z^{(k,j-1)}_j -
          \frac{g'(z^{(k)}_j)}{\tau} }
      \end{align*}
      \item[Case (2): $\alpha^\star = 0$:] The corresponding condition is 
        \begin{align*}
          0 \in [g'(\alpha^\star)-\lambda, g'(\alpha^\star)+\lambda]
        \end{align*}
        Again using~\eqref{eq:betadef}, we have 
        \begin{align*}
          g'(\alpha^\star) &= g'(z^{(k,j-1)}_j) + \tau(\alpha^\star -
          z^{(k,j-1)}_j) = g'(z^{(k,j-1)}_j) - \tau( z^{(k,j-1)}_j)
          \qquad \text{[since $\alpha^\star = 0$]} \\ \implies
          \alpha^\star &= 0 \in \left[\frac{g'(z^{(k,j-1)}_j)}{\tau} -
            z^{(k,j-1)}_j - \frac{\lambda}{\tau},
            \frac{g'(z^{(k,j-1)}_j)}{\tau} - z^{(k,j-1)}_j +
            \frac{\lambda}{\tau} \right] \\ \implies \alpha^\star &= 0
          = S_{\lambda/\tau}\rbr{z^{(k,j-1)}_j -
            \frac{g'(z^{(k,j-1)}_j)}{\tau}}
        \end{align*}
        where the last step follows from the definition of the
        shrinkage operator~\eqref{eq:scshrinkage}.
      \item[Case (3): $\alpha^\star <0$:] This implies that 
        \begin{align*}
          g'(\alpha^\star)-\lambda = 0
        \end{align*}
        Using~\eqref{eq:betadef} to substitute for $g'(\alpha^\star)$
        as in the previous cases, we have,
        \begin{align*}
          g'(z^{(k,j-1)}_j) + \tau(\alpha^\star - z^{(k,j-1)}_j) -
          \lambda = 0
        \end{align*}
      which yields 
      \begin{align*}
        \alpha^\star &= z^{(k,j-1)}_j -
        \frac{1}{\tau}g'(z^{(k,j-1)}_j) + \frac{\lambda}{\tau} \\ &=
        S_{\lambda/\tau}\rbr{z^{(k,j-1)}_j -
          \frac{g'(z^{(k,j-1)}_j)}{\tau} }
      \end{align*}
      where the last inequality follows because $\alpha^\star < 0$.

      Combining these three cases and using \eqref{eq:equiv_f_g} we get 
      \begin{align*}
        z^{(k,j)}_j = S_{\lambda/\tau}\rbr{z^{(k,j-1)}_j - \frac{
            \sbr{\grad f(z^{(k,j-1)})}_j }{\tau} } \ .
      \end{align*}
  \end{description}

\begin{figure}
\begin{center}
\begin{tikzpicture}[scale=1]
{\footnotesize 
\draw[<->] (-2,0) -- (7,0); 
\draw[<->] (0,-2) -- (0,2); 

\draw[dashed] (-2,-2) -- (2,2) node[right] {$y=x_j$} ;

\draw[very thick] (-1,-2) -- (1,0); 
\draw[very thick] (1,0) -- (5,0); 
\draw[very thick] (5,0) -- (7,2) node[right]{$y = S_{\lambda}\rbr{x_j - [\grad f(x)]_j}$}; 

\draw[very thick] (3,-.1) -- (3,.1); 

\node at (1,.5)  {$[\nabla f(x)]_j-\lambda$}; 
\node at (3,-.5) {$[\nabla f(x)]_j$}; 
\node at (5,-.5) {$[\nabla f(x)]_j+\lambda$}; 
}

\end{tikzpicture}
\end{center}
\caption{Interval to right of zero}
\label{fig:fig1}
\end{figure}

\begin{figure}
\begin{center}
\begin{tikzpicture}[scale=1]
{\footnotesize 
\draw[<->] (-3,0) -- (5,0); 
\draw[<->] (0,-2) -- (0,2); 

\draw[dashed] (-2,-2) -- (2,2) node[right] {$y=x_j$};

\draw[very thick] (-3,-2) -- (-1,0); 
\draw[very thick] (-1,0) -- (3,0); 
\draw[very thick] (3,0) -- (5,2) node[right] {$y = S_{\lambda}\rbr{x_j - [\grad f(x)]_j}$}; 

\draw[very thick] (1,-.1) -- (1,.1); 

\node at (-1,.5)  {$[\nabla f(x)]_j-\lambda$}; 
\node at (1,-.5) {$[\nabla f(x)]_j$}; 
\node at (3,-.5) {$[\nabla f(x)]_j+\lambda$}; 
}

\end{tikzpicture}
\end{center}
\caption{Interval crossing zero}
\label{fig:fig2}
\end{figure}
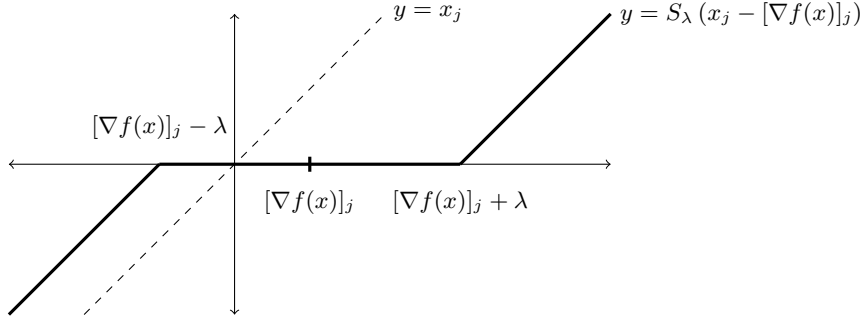

\section{Proof of lemma \ref{lem:scale}}
\label{sec:scale_proof}
  We prove the supersolution case only as the subsolution case is analogous.
  Let for a particular $\tau >0$, $x \geq \Sbb_{\lambda/\tau}\rbr{x -
    \frac{\grad f(x)}{\tau}}$.  We prove the inequality for the scalar
  $S$ operator on an arbitrary coordinate $j$. The subsequent proofs are divided 
  into three disjoint cases related to the values taken by the shrinkage operator. 
  \begin{description}
    \item \textbf{Case 1 $[\grad f(x)]_j - \lambda > 0$:} 

      This is illustrated in figure \ref{fig:fig1}. Depending on
      whether $\tau > 1$ or not, the graph of the shrinkage operator
      shifts left or right, but clearly division by $\tau$ does not
      change the sign of the shrinkage operator value at any
      point. As is evident from figure \ref{fig:fig1}, the graph of $y= x_j$ always lies 
      above that of the shrinkage operator. Thus
      \begin{align}
        \label{eq:sup_sol_inequality}
        x_j \geq S_{\lambda/\tau}\rbr{x_j - \frac{[\grad f(x)]_j}{\tau}}
      \end{align}
        for all values of $\tau$ and in particular for $\tau =
        1$. Thus $x$ is a supersolution.
    
    \item \textbf{Case 2 $0 \in [ [\grad f(x)]_j - \lambda, [\grad f
          (x)]_j + \lambda]$:}
      
      The corresponding case is illustrated in figure
      \ref{fig:fig2}. It is clear from the figure that $x_j \geq
      S_{\lambda/\tau}\rbr{x_j - \frac{[\grad f(x)]_j}{\tau}}$ for
      positive $\tau$, only when $x_j \geq 0 $. Just as in the previous
      case, changing the value of $\tau$ shifts the graph by
      appropriate scale without changing its sign. Thus \eqref{eq:sup_sol_inequality} holds 
      for $x_j \geq 0$ irrespective of the value of $\tau$. In particular, it
      should hold for $\tau = 1$ which proves that $x$ is a supersolution. 

      \item \textbf{Case 3 $[\grad f(x)]_j + \lambda < 0$:} 

        As illustrated in figure \ref{fig:fig3}, in this case the
        graph of the shrinkage operator will always lie below the
        value of $x_j$. Thus \eqref{eq:sup_sol_inequality} will not be
        satisfied for any value of $\tau$ which makes the case
        vacuous.
  \end{description}

  To prove the converse direction, we look at the same three exclusively
  disjoint cases for an arbitrary coordinate $j$. 
\begin{description}
  \item \textbf{Case 1 $[\grad f(x)]_j - \lambda > 0$:}

    As seen from figure \ref{fig:fig1}, $x$ is always a supersolution
    since $[\grad f(x)]_j + \lambda > [\grad f(x)]_j - \lambda > 0$
    and the graph of the shrinkage operator uniformly stays below the
    value of $x_j$. Since the sign of the shrinkage operator value
    does not change due to division by $\tau >0$,
    \eqref{eq:sup_sol_inequality} holds for arbitrary positive $\tau$.
    
  \item \textbf{Case 2 $0 \in [ [\grad f(x)]_j - \lambda, [\grad f
        (x)]_j + \lambda]$:}

      If $x$ is a supersolution, it means that the value attained by the
      shrinkage operator lies below the value of $x_j$ , which
      is true when $x_j \geq 0$ (Figure \ref{fig:fig2}). In this subset of
      the domain, division by $\tau$ maintains the sign of the shrinkage value
      and thus \eqref{eq:sup_sol_inequality} holds.
    
    \item \textbf{Case 3 $[\grad f(x)]_j + \lambda < 0$:} 

      In this case the graph of the shrinkage operator always lies
      above the value of $x_j$ . Thus $x$ can never be a supersolution
      if this condition holds true.

\end{description}

\begin{figure}
\begin{center}
\begin{tikzpicture}[scale=1]
{\footnotesize 
\draw[<->] (-7,0) -- (1,0); 
\draw[<->] (0,-2) -- (0,2); 

\draw[dashed] (-2,-2) -- (2,2) node[right] {$y=x_j$};

\draw[very thick] (-7,-2) -- (-5,0); 
\draw[very thick] (-5,0) -- (-1,0); 
\draw[very thick] (-1,0) -- (1,2) node[left] {$y = S_{\lambda}\rbr{x_j - [\grad f(x)]_j}$}; 

\draw[very thick] (-3,-.1) -- (-3,.1); 

\node at (-5,.5)  {$[\nabla f(x)]_j-\lambda$}; 
\node at (-3,.5) {$[\nabla f(x)]_j$}; 
\node at (-1,-.5) {$[\nabla f(x)]_j+\lambda$}; 
}

\end{tikzpicture}
\end{center}
\caption{Interval to left of zero}
\label{fig:fig3}
\end{figure}
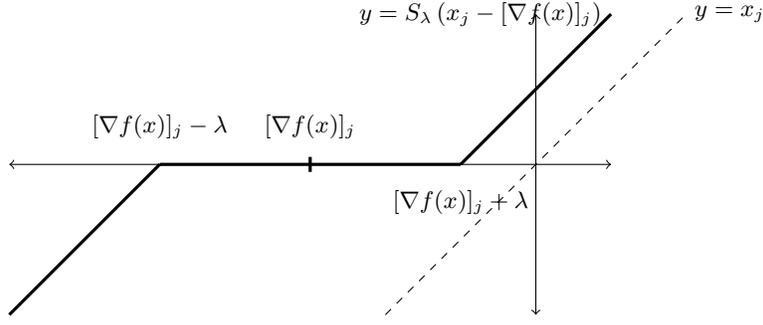

\section{Proof of lemma \ref{lem:monotonic}}
\label{sec:monotonic_proof}
Let 
\[
h(\tau) = S_{\lambda/\tau}\rbr{x_j - \frac{[\grad f(x)]_j}{\tau}}
\] 
We again look at the three disjoint cases for arbitrary $\tau_1,
\tau_2 \in (0, \infty)$ with $\tau_1\geq\tau_2$ and show that
$h(\tau_1)\geq h(\tau_2)$.
\begin{description}
  \item \textbf{Case 1 $[\grad f(x)]_j - \lambda > 0$:} 

    Since both the hinge points in the graph will be positive (figure \ref{fig:fig4}
    ), we have $\frac{[\grad f(x)]_j-\lambda}{\tau_1}\leq
    \frac{[\grad f(x)]_j-\lambda}{\tau_2}$ and $\frac{[\grad
        f(x)]_j+\lambda}{\tau_1}\leq \frac{[\grad
        f(x)]_j+\lambda}{\tau_2}$. 
    Thus it is trivial to see that the graph of $h(\tau_1)$ is always greater than $h(\tau_2)$.
 
  \item \textbf{Case 2 $0 \in [[\grad f(x)]_j - \lambda, [\grad
          f(x)]_j + \lambda]$:}

      Since $x$ needs to be a supersolution, we only need to consider
      the subset of the domain when $x_j \geq 0$. We still have
      $\frac{[\grad f(x)]_j+\lambda}{\tau_1}\leq \frac{[\grad
          f(x)]_j+\lambda}{\tau_2}$ and it is obvious from figure \ref{fig:fig5}, 
      that $h(\tau_1) \geq h(\tau_2)$. 

    \item \textbf{Case3 $[\grad f(x)]_j + \lambda < 0:$} 
      
      Since $x$ can never be a supersolution in this case as shown in
      the proof of lemma \ref{lem:scale}, this case is vacuous.
\end{description}

\begin{figure}
\begin{center}
\begin{tikzpicture}[scale=1]
{\footnotesize 
\draw[<->] (-2,0) -- (7,0); 
\draw[<->] (0,-2) -- (0,2); 

\draw[dashed] (-2,-2) -- (2,2) node[left] {$y=x_j$};

\draw[very thick] (-1,-2) -- (1,0); 
\draw[very thick] (1,0) -- (5,0); 
\draw[very thick] (5,0) -- (7,2) node[right]{$y = S_{\lambda/\tau_2}\rbr{x_j - \frac{[\grad f(x)]_j}{\tau_2}}$}; 

\draw[very thick,dotted] (-1.5,-2) -- (0.5,0); 
\draw[very thick,dotted] (0.5,0) -- (4.5,0); 
\draw[very thick,dotted] (4.5,0) -- (6.5,2) node[left]{$y = S_{\lambda/\tau_1}\rbr{x_j - \frac{[\grad f(x)]_j}{\tau_1}}$}; 

\draw[very thick] (0.5,-.1) -- (0.5,.1); 
\draw[very thick] (1,-.1) -- (1,.1); 
\draw[very thick] (4.5,-.1) -- (4.5,.1); 
\draw[very thick] (5,-.1) -- (5,.1); 

\node at (1,.5)  {$\frac{[\nabla f(x)]_j-\lambda}{\tau_2}$}; 
\node at (5,-.5) {$\frac{[\nabla f(x)]_j+\lambda}{\tau_2}$}; 

\node (tnode) at (1,-1)  {$\frac{[\nabla f(x)]_j-\lambda}{\tau_1}$}; 
\node at (4.1,.5) {$\frac{[\nabla f(x)]_j+\lambda}{\tau_1}$}; 

\draw [->](tnode) -- (0.6,-.1);  
}

\end{tikzpicture}
\end{center}
\caption{Interval to right of zero}
\label{fig:fig4}
\end{figure}

\begin{figure}
\begin{center}
\begin{tikzpicture}[scale=1]
{\footnotesize 
\draw[<->] (-3,0) -- (5,0); 
\draw[<->] (0,-2) -- (0,2); 

\draw[dashed] (-2,-2) -- (2,2)node[above]{$y=x_j$};

\draw[very thick] (-3,-2) -- (-1,0); 
\draw[very thick] (-1,0) -- (3,0); 
\draw[very thick] (3,0) -- (5,2) node[right]{$y = S_{\lambda/\tau_2}\rbr{x_j - \frac{[\grad f(x)]_j}{\tau_2}}$}; 

\draw[very thick,dotted] (-2.5,-2) -- (-0.5,0); 
\draw[very thick,dotted] (-0.5,0) -- (2.5,0); 
\draw[very thick,dotted] (2.5,0) -- (4.5,2) node[above]{$y = S_{\lambda/\tau_1}\rbr{x_j - \frac{[\grad f(x)]_j}{\tau_1}}$}; 

\draw[very thick] (-1,-.1) -- (-1,.1); 
\draw[very thick] (-.5,-.1) -- (-.5,.1); 
\draw[very thick] (2.5,-.1) -- (2.5,.1); 
\draw[very thick] (3,-.1) -- (3,.1); 

\node at (-1,.5)  {$\frac{[\nabla f(x)]_j-\lambda}{\tau_2}$}; 
\node at (3,-.5) {$\frac{[\nabla f(x)]_j+\lambda}{\tau_2}$}; 

\node (tnode) at (0.8,-1)  {$\frac{[\nabla f(x)]_j-\lambda}{\tau_1}$}; 
\node at (2.5,.5) {$\frac{[\nabla f(x)]_j+\lambda}{\tau_1}$}; 

\draw [->] (tnode) -- (-0.4, -.1);
}

\end{tikzpicture}
\end{center}
\caption{Interval crossing zero}
\label{fig:fig5}
\end{figure}

\end{document}